\documentclass{elsarticle}
\usepackage{hyperref}




\bibliographystyle{model1-num-names}







\usepackage[table]{xcolor}
\usepackage{graphicx}
\usepackage{times}
\usepackage{amsmath,amsfonts}
\usepackage{algorithm}
\usepackage{tikz}
\usetikzlibrary{bayesnet}

\newcommand{\dd}{\,d}
\newcommand{\Prob}{\operatorname{P}}
\newcommand{\given}{\operatorname{\mid}}
\newcommand{\X}{\mathbf{X}}
\newcommand{\G}{\mathcal{G}}
\newcommand{\D}{\mathcal{D}}

\newcommand{\PXi}{\Pi_{X_i}}
\newcommand{\XPi}{X_i \given \PXi}
\newcommand{\Dirichlet}{\operatorname{Dirichlet}}
\newcommand{\Categorical}{\operatorname{Categorical}}
\newcommand{\btheta}{\boldsymbol{\theta}}
\newcommand{\tXi}{\btheta_{\XPi}}

\newcommand{\tXPiF}{\btheta_{X_i, \PXi\given F}}
\newcommand{\tXPif}{\btheta_{X_i, \PXi}^f}

\newcommand{\htXPif}{\boldsymbol{\widehat{\theta}}_{X_i, \PXi}^f}

\newcommand{\tXijf}{\btheta_{X_i \given j}^f}
\newcommand{\tXij}{\btheta_{X_i \given j}}

\newcommand{\htXij}{\boldsymbol{\widehat{\theta}}_{X_i \given j}}

\newcommand{\balpha}{\boldsymbol{\alpha}}
\newcommand{\bai}{\boldsymbol{\alpha}_i}
\newcommand{\aijk}{\alpha_{ijk}}

\newcommand{\BD}{\operatorname{BD}}
\newcommand{\BHD}{\operatorname{BHD}}
\newcommand{\E}{\operatorname{E}}
\newcommand{\bkappa}{\boldsymbol{\kappa}}
\newcommand{\kijk}{\kappa_{ijk}}
\newcommand{\hkijk}{\widehat{\kappa}_{ijk}}
\newcommand{\hkij}{\widehat{\kappa}_{ij}}
\newcommand{\bnu}{\boldsymbol{\nu}}
\newcommand{\forallF}{f = 1, \ldots, |F|}

\newtheorem{thm}{Theorem}
\newtheorem{lemma}[thm]{Lemma}
\newproof{proof}{Proof}

\begin{document}

\begin{frontmatter}

\title{A Bayesian Hierarchical Score for Structure Learning from Related Data Sets}

\author{Laura Azzimonti\corref{mycorrespondingauthor}}
\cortext[mycorrespondingauthor]{Corresponding author}
\ead{laura.azzimonti@idsia.ch}
\author{Giorgio Corani}
\author{Marco Scutari}
\address{Istituto Dalle Molle di Studi sull'Intelligenza Artificiale (IDSIA),
  USI/SUPSI, Lugano, Switzerland}

\begin{abstract}
  Score functions for learning the structure of Bayesian networks in the
  literature assume that data are a homogeneous set of observations; whereas it
  is often the case that they comprise different related, but not homogeneous,
  data sets collected in different ways.
  In this paper we propose a new Bayesian Dirichlet score, which we call
  Bayesian Hierarchical Dirichlet (BHD). The proposed score is based on a
  hierarchical model that pools information across data sets to learn a single
  encompassing network structure, while taking into account the differences in
  their probabilistic structures. We derive a closed-form expression for BHD
  using a variational approximation of the marginal likelihood, we study the
  associated computational cost and we evaluate its performance using simulated
  data. We find that, when data comprise multiple related data sets, BHD
  outperforms the Bayesian Dirichlet equivalent uniform (BDeu) score in terms of
  reconstruction accuracy as measured by the Structural Hamming distance, and
  that it is as accurate as BDeu when data are homogeneous. This improvement is
  particularly clear when either the number of variables in the network or the
  number of observations is large. Moreover, the estimated networks are sparser
  and therefore more interpretable than those obtained with BDeu thanks to a
  lower number of false positive arcs.
\end{abstract}

\begin{keyword}
  Bayesian networks; structure learning; hierarchical priors;
  Dirichlet mixtures; network scores.
\end{keyword}

\end{frontmatter}


\section{Introduction}
\label{sec:intro}

Investigating challenging problems at the forefront of science increasingly
requires large amounts of data that can only be gathered through collaborations
between several institutions. This naturally leads to heterogeneous data sets
that are in fact the collation of related, but not identical, subsets of data
that will necessarily differ in the details of how they are collected. Examples
can be found in multi-centre clinical trials, in which protocols are applied in
slightly different ways to different patient populations 
\cite{multicenter,spiegelhalter}; population genetics, which studies the 
architecture of phenotypic traits across populations and their evolution \cite{goddard2,reviewer,makowsky,campos}; ecology and environmental sciences, 
which produce different patterns of measurement errors and limitations in 
different environments \cite{kazakhstan,vitolo17,ecology}. A common goal in 
analysing  these complex data is to construct a mechanistic model that elucidates
the interplay between different elements under investigation, either as a step 
towards building a causal model or to perform accurate prediction from a 
purely probabilistic perspective.

The task of efficiently modelling such related data sets is usually 
tackled by hierarchical models \cite{gelman}, which pool
the information common to the different subsets of the data while 
encoding the information that is specific to each subset. For instance multilevel regression models  estimate  the conditional distribution of the  response variables in these cases.

Bayesian networks (BNs) \cite{koller} provide a rigorous approach for 
modelling joint distributions, by representing variables as nodes and
probabilistic dependencies as arcs in a graph. 
They can be used for both
causal and predictive modelling.
To the best of our knowledge,
however, no method has been proposed in the literature to combine these two
approaches to learn a single BN structure from a set of related data sets and
get the best of both worlds.

Available methods focus on learning an ensemble of BNs that
have similar structures by penalising differences in their arc sets
\cite{niculescu2007inductive, Oates2016}. Parameter learning from related data
sets has been investigated in \cite{demichelis2006hierarchical} for Gaussian BNs
and in \cite{malovini2012hierarchical} for discrete BNs. However, they only
consider a naive Bayes structure and they initialise their hyperprior with
maximum likelihood point estimates.

In this paper, we show how to learn the structure of a BN from related data
sets, containing the same variables, by building on our previous work on
parameter learning in \cite{azzimonti19}. 
The proposed approach is particularly suited to deal with multiple related data sets characterised by few observations per data set or by an unbalanced number of observations across data sets. In these settings, it is important to share information across data sets to obtain robust estimates of both the parameters and the structure of the BN.
First, we
briefly introduce BNs and hierarchical models in the context of discrete data as
well as prior work on parameter learning from related data sets in Section \ref{sec:background}. We  propose
a score function for related data sets in Section \ref{sec:hBN} and we study the
associated computational complexity in both a theoretical and empirical way in
Section \ref{sec:complexity}. Then, we show an example of structure learning by means of BHD in Section \ref{sec:exe} and 
we  study its performance on different
simulation studies in Section \ref{sec:simulations}. Finally, we discuss our
results and possible future research directions in Section \ref{sec:concl}.

\section{Background and Notation}
\label{sec:background}

Bayesian networks (BNs) are a class of graphical models that use a directed
acyclic graph (DAG) $\G$ to model a set of random variables $\X = \{X_1, \ldots,
X_N\}$: each node is associated with one $X_i \in \X$ and arcs represent direct
dependence relationships. Graphical separation of two nodes implies the
conditional independence of the corresponding random variables. In principle,
there are many possible choices for the joint distribution of $\X$; literature
has focused mostly on discrete BNs \cite{heckerman}, in which both $\X$ and the
$X_i$ are categorical (multinomial) random variables. Other possibilities
include Gaussian BNs and conditional linear Gaussian BNs \cite{lauritzen},
which include both discrete and Gaussian BNs as particular cases.

The task of learning a BN from a data set $\D$ of $n$ observations is performed
in two steps in an inherently Bayesian fashion:
\begin{align}
\label{eq:lproc}
  \underbrace{\Prob(\G, \Theta \given \D)}_{\text{learning}} =
    \underbrace{\Prob(\G \given \D)}_{\text{structure learning}} \cdot
    \underbrace{\Prob(\Theta \given \G, \D)}_{\text{parameter learning}},
\end{align}
where $\Theta$ are the parameters of $\X$.
\emph{Structure learning} consists in finding the DAG $\G$ that encodes the
dependence structure of the data. In this paper we will focus on score-based
algorithms, which are typically heuristic search algorithms that use a
goodness-of-fit score such as BIC \cite{schwarz} or the Bayesian Dirichlet
equivalent uniform (BDeu) marginal likelihood \cite{heckerman} to find an
optimal $\G$.
\emph{Parameter learning} involves the estimation of the parameters $\Theta$
given the DAG $\G$ learned in the first step. Thanks to the Markov property,
this step is computationally efficient because if the data are complete the
\emph{global distribution} of $\X$ decomposes into
\begin{equation}
\label{eq:parents}
  \Prob(\X \given \G) = \prod_{i=1}^N \Prob(\XPi)
\end{equation}
and the \emph{local distribution} associated with each node $X_i$ depends only
on the configurations of its parents $\PXi$. If we estimate the parameters 
$\Theta = \{ \Theta_{X_1}, \ldots, \Theta_{X_N}\}$ in such a way that they 
are independent across local distributions, parameter learning simplifies 
into a collection of low-dimensional estimation problems for the $\Theta_{X_i}$ 
associated with each $\XPi$ given the data available for those variables. 

\subsection{Classic Multinomial-Dirichlet Parameterisation}

In the case of discrete BNs, we assume that each $\XPi$ follows a categorical
distribution for each configuration of $\PXi$. Hence the parameters of $\XPi$
are the conditional probabilities $\tXi =\{\tXij, j=1,\ldots,|\PXi|\}$, whose
$k$th element corresponds to $\Prob(X_i = k \given \PXi = j)$,
for which we assume a conjugate Dirichlet prior:
\begin{align}
  \left. \tXi \,\right| \bai &\sim \Dirichlet(\bai)  \nonumber \\
  X_i \left|\, \PXi ,\tXi \right. &\sim \Categorical\left(\tXi\right),
\label{eq:BDeu}
\end{align}
where $\bai = \{\aijk,  j = 1, \ldots, |\PXi|; k = 1,
\ldots, |X_i|\}$, with $i = 1, \ldots, N$, is a hyperparameter vector defined over a simplex with sum
$\sum_{jk} \aijk = s_i > 0$. The posterior estimator of $\tXi$ is:
\begin{align}
  &\left[ \htXij\right]_{k} = \frac{\aijk + n_{ijk}}{\alpha_{ij} + n_{ij}},& \text{where }
  &n_{ij} = \sum\nolimits_k n_{ijk},\ \  \alpha_{ij} = \sum\nolimits_k \aijk,
\label{eq:postprob}
\end{align}
and $n_{ijk}$ represents the number of observations for which $X_i=k$ and
$\PXi=j$. It is common to set $\aijk = s_i / (|X_i||\PXi|)$ with the same
\emph{imaginary sample size} $s_i = s$ for all $X_i$.

In the context of structure learning, we have $\Prob(\G \given \D) \propto
\Prob(\D \given \G) \Prob(\G)$ and we can use $\Prob(\D \given \G)$ as a score
function. (Implicitly, we are saying that $\Prob(\G) \propto 1$ by disregarding
it while still searching for the \emph{maximum a posteriori} DAG.) Assuming 
\emph{positivity} ($\tXi > 0$), \emph{parameter independence}
(columns of $\tXi$ associated with different parent configurations are
independent), \emph{parameter modularity} ($\tXi$ associated with different
nodes are independent) and \emph{complete data}, \cite{heckerman} derived a
closed form expression for $\Prob(\D \given \G)$ known as the \emph{Bayesian
Dirichlet} (BD) family of scores:
\begin{equation}
  \BD(\G, \D; \balpha) =
  \prod_{i=1}^N \BD\left(\XPi; \bai\right) =
  \prod_{i=1}^N \prod_{j = 1}^{|\PXi|}
    \left[
      \frac{\Gamma(\alpha_{ij})}{\Gamma(\alpha_{ij} + n_{ij})}
      \prod_{k=1}^{|X_i|} \frac{\Gamma(\aijk + n_{ijk})}{\Gamma(\aijk)}
    \right].
\label{eq:bd}
\end{equation}
Choosing again $\aijk = s / (|X_i||\PXi|)$ gives the \emph{Bayesian Dirichlet
equivalent uniform} (BDeu) score. A default value of $s = 1$ has been
recommended by \cite{ueno}. Assuming a uniform prior for both $\G$ and $\tXi$
is common in the literature, even if they  can have serious impact on the
accuracy of the learned structures \cite{pgm16}, especially for sparse data
that are likely to lead to violations of the positivity assumption
\cite{behaviormetrika18}. These assumptions are taken to represent lack of
prior knowledge, and they make BDeu the only BD score giving the 
same score value to BNs in the same equivalence class (\emph{score-equivalence} 
\cite{chickering}). Equivalence classes are characterised by the skeleton of 
$\G$ (its underlying undirected graph) and its v-structures (patterns of arcs 
of the type $X_j \rightarrow X_i \leftarrow X_k$, with no arc between $X_j$ 
and $X_k$), and group DAGs that encode the same global distribution.

\subsection{Hierarchical Multinomial-Dirichlet Parameterisation for Related
Data Sets}

The classic Multinomial-Dirichlet model in \eqref{eq:BDeu} can be extended to
handle related data sets by treating it as a particular case of the hierarchical
Multinomial-Dirichlet (hierarchical MD) model presented in \cite{azzimonti19}.
For this purpose, we introduce an auxiliary variable $F$ which identifies the
$|F|$ related data sets. Assuming that the data sets contain the same variables
and that $F$ is always observed, we can learn a BN with a common structure $\G$
but with different parameter estimates for each related data set.

For simplicity, we apply the hierarchical model independently to each local
distribution to estimate the joint distribution of $(X_i, \PXi)$ conditional on
$F$, $\tXPiF=\{\tXPif, f=1, \ldots, |F|\}$, by  pooling information between
different data sets. The resulting hierarchical model is shown in the top panel
of Figure \ref{fig:BN_hier_multi}. Specifically, for each node $X_i$ we assume
$\bai$ to be a latent random vector and we add a Dirichlet hyperprior to make
$\tXPif$ a mixture of Dirichlet distributions:

\begin{align}
  \bai \given s_i, \balpha_{0,i}
    &\sim s_i\cdot \Dirichlet(\balpha_{0,i}),  & \nonumber \\
  \left. \tXPif \right| \bai
    &\sim \Dirichlet(\bai) & \forallF, \label{eq:hier} \\
  X_i,\PXi  \left|\, F=f, \tXPif \right.
    &\sim \Categorical\left(\tXPif\right) & \forallF, \nonumber
\end{align}
where this time $\bai = \{ \aijk \}$ is a latent random vector defined over a
simplex of dimension $|X_i||\PXi|-1$ with sum $s_i$. The new hyperparameters of this model are the imaginary
sample size $s_i$ and the parameter vector $\balpha_{0,i}$, which in turn is
defined over a simplex with sum $s_{0,i}$. 
The two parameters $s_0$ and $s_i$ control respectively 
the concentration of the 
$\bai$ random vectors around the discrete distribution $\balpha_{0,i}/s_0$ and the variance of 
$\tXPif$, with
$f=1, \ldots, |F|$,
around the normalised random vector $\bai/s_i$.
Larger values of $s_i$ yield  $\tXPif$ 
that are more similar to each other, while
larger values $s_0$ provide $\tXPif$ closer
to the uniform distribution.
In the following we will drop both
hyper-parameters from the notation for brevity.

\begin{figure}[t]
	\begin{center}
	\begin{tikzpicture}[x=1.7cm,y=1.8cm]


  \node[obs]                   (X)      {$ X_i, \Pi_{X_i} | f$} ;
  \node[latent, left=of X]    (theta)  {$\boldsymbol{\theta}_{ X_i, \Pi_{X_i}}^f$};
  \node[latent, left=of theta] (alpha) {$\boldsymbol{\alpha}_i$};
  \node[const, left=of alpha] (alpha0) {$\boldsymbol{\alpha}_{0,i}$};
  \node[latent, below=of X]    (theta_vi)  {$\boldsymbol{\theta}_{ X_i, \Pi_{X_i}}^f$}; %
   \node[latent, left=of theta_vi]    (nu_vi)  {$\boldsymbol{\nu}^f_{i}$}; %
 \node[latent, left=of nu_vi] (alpha_vi) {$\boldsymbol{\alpha}_i$};
  \node[latent, left=of alpha_vi] (alpha0_vi) {$\tau_i \boldsymbol{\kappa}_i$};

  \factor[left=of X]     {X-f}     {Cat.} {} {} ;
   \factor[left=of theta] {theta-f} {Dir.} {} {} ; %
\factor[left=of alpha] {alpha-f} {Dir.} {} {} ; %
 \node[const, below=of alpha-f, yshift=+1cm] (s) {$s_i$};
 \factor[left=of theta_vi] {theta-f_vi} {Dir.} {} {} ; %
 \factor[left=of alpha_vi] {alpha-f_vi} {Dir.} {} {} ; %
 \node[const, below=of alpha-f_vi,yshift=+1cm] (s_vi) {$s_i$};

    \factoredge {theta}    {X-f}     {X} ; %
  \factoredge {alpha}    {theta-f}     {theta} ; %
  \factoredge {alpha0}    {alpha-f}     {alpha} ; %
  \factoredge    {alpha-f} {s}  {alpha-f}; %
  \factoredge {nu_vi}    {theta-f_vi}     {theta_vi} ; %
  \factoredge {alpha0_vi}    {alpha-f_vi}     {alpha_vi} ; %
  \factoredge    {alpha-f_vi} {s_vi}  {alpha-f_vi}; %

  \plate {plate1} { %
    (X)
    (theta)
  } {$f = 1, \ldots, |F|$}; %
  \plate {plate2} { %
    (plate1)
    (alpha0)
  } {$i =1, \ldots, N$}; %
 \plate {plate3} { %
    (theta_vi)
    (nu_vi)
  } {$f =1, \ldots, |F|$}; %
  \plate {plate2} { %
    (plate3)
    (alpha0_vi)
  } {$i  = 1, \ldots, N$}; %

\end{tikzpicture}
	\end{center}
  \caption{Directed factor graphs representing hierarchical
    Multinomial-Dirichlet model for related data sets (top panel) and its
    variational approximation (bottom panel). Cat. and Dir. represent
    respectively Categorical and Dirichlet distributions.}
  \label{fig:BN_hier_multi}
\end{figure}
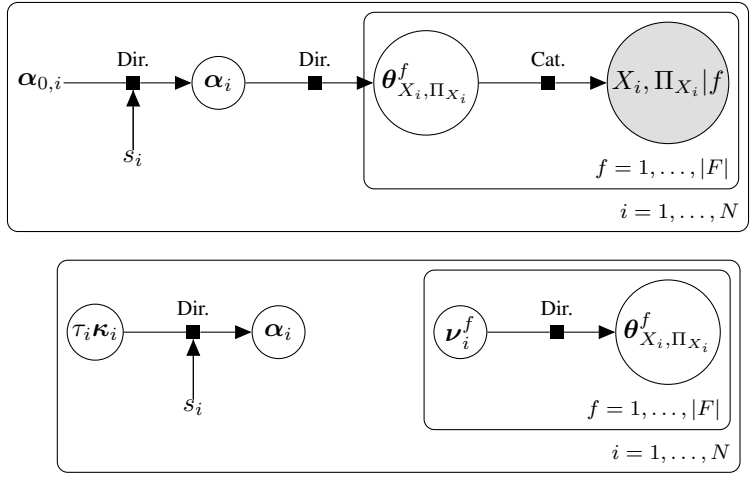

The marginal posterior distribution for $\tXPif$ is not analytically tractable,
as noted in \cite{azzimonti19}. However, the posterior average can be compactly
expressed as:
\begin{align}
  \left[ \htXPif \right]_{jk} =
    \frac{\E[\aijk] + n^f_{ijk}}
         {s_i + n^f_{i}},& &\text{where }
  &n_{i}^f = \sum\nolimits_{jk} n_{ijk}^f.
\label{eq:mean}
\end{align}
$\E[\aijk]$ represents the posterior average of $\aijk$; it cannot be written in
closed form but can be approximated using variational inference \cite{VB1,VB2}.
The resulting $\htXPif$ are data-set-specific but depend on all the available
data via the \emph{partial pooling} \cite{gelman} of the information present in
the $|F|$ related data sets, thanks to the shared $\E[\aijk]$ term. On the one
hand, this produces more reliable estimates for sparse data and for related data
sets with unbalanced sample sizes \cite{casella}. On the other hand, the prior
in \eqref{eq:hier} violates the parameter independence assumption, leading to a
marginal likelihood that does not decompose over parent configurations and that
is not score-equivalent. The prior is specified on $(X_i,\PXi)$, as opposed to
$\XPi$, which is only later computed from the joint distribution. As a result,
the distribution of $(X_i,\PXi \given F)$ is different from the product of the distributions of $(\XPi, F)$ and $(\PXi \given F)$ because
$(X_i,\PXi \given F)$ and $(\PXi \given F)$ are estimated by applying
the hierarchical model separately to two different sets of variables, thus
pooling  the available information differently.

\section{Structure Learning from Related Data Sets}
\label{sec:hBN}
In this section we derive the marginal likelihood score associated with the
hierarchical model in \eqref{eq:hier} to implement structure learning from
related data sets  containing the same variables. As the hierarchical model is
not analytically tractable, we  approximate the associated posterior distribution with the product of two independent distributions by means of variational inference. The approximate
variational model, shown in the bottom panel of Figure~\ref{fig:BN_hier_multi}, is the following:
\begin{align}
  \bai \left|\, s_i, \tau_i, \bkappa_i \right.
    &\sim s_i \cdot \Dirichlet\left(\tau_i \bkappa_i\right), & \nonumber \\
  \left. \tXPif \,\right| \bnu^f_{i}
    &\sim \Dirichlet\left(\bnu^f_{i}\right) & \forallF, \label{eq:hier_vb} 
\end{align}
where $\bnu^f_{i}=\{\nu_{ijk}^f\}$, $\bkappa_i = \{\kijk\}$ with $i = 1,
\ldots N$;  $j = 1, \ldots, |\PXi|$; $k = 1, \ldots, |X_i|$ and $f = 1, \ldots,
|F|$; $\sum_{jk} \kijk = 1$ and $\tau_i\in \mathbb{R}^+$ for $i = 1 \ldots
N$. These parameters are estimated from the available data by minimising the
Kullback-Leibler divergence between the exact posterior distribution $p$ and its
variational approximation $q$, as described in \cite{azzimonti19}. The
algorithm used to estimate the variational parameters is summarised in
\ref{app:estimationalgo}.

Since $F$ is assumed to be the parent of any node in the network and to be
always observed, we treat it as an input variable in a conditional Bayesian
network \cite[Section 5.6]{koller} and we do not explicitly assign it a
distribution. Therefore, the auxiliary variable $F$ will not influence the
score.

The variational model \eqref{eq:hier_vb} is similar to the original hierarchical
MD model \eqref{eq:hier}, but it removes the dependence between $\tXPif$ and
$\bai$ thus making it possible to derive in closed form the variational approximation of the marginal
likelihood $\Prob(\D \given F, \G)$.

\begin{lemma}
Given $|F|$ complete and related data sets $\D=\{\D_f, f = 1, \ldots, |F|\}$,
under the assumption that the related data sets have the same dependence
structure $\G$ and that each local distribution follows the hierarchical MD
\eqref{eq:hier} with positive parameters, the variational approximation of the
marginal likelihood of the data $\Prob(\D \given F, \G)$ is
\begin{equation}
  q(\D \given F, \G)\! =
  \!\prod_{i = 1}^N \prod_{f = 1}^{|F|}\prod_{j = 1}^{|\PXi|}
  \!\!\left[
    \frac{\Gamma(s_i  \hkij)}{\Gamma(s_i \hkij +  n^f_{ij})}
    \prod_{k = 1}^{|X_i|} \frac{\Gamma(s_i \hkijk + n^f_{ijk})}{\Gamma(s_i\hkijk)}
  \right],
\label{eq:bhd}
\end{equation}
where $n^f_{ij}=\textstyle{\sum_k} n^f_{ijk}$, $\hkij=\textstyle{\sum}_k \hkijk$
and $s_i\hkijk$ represents the posterior average of $\aijk$ under the
variational model \eqref{eq:hier_vb}.
\label{lem:marginal}
\end{lemma}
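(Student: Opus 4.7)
The plan is to follow the same logic as the derivation of the classical Bayesian Dirichlet score, but with the latent hyperparameter vector $\bai$ integrated out via the variational distribution $q$ in place of the intractable true posterior. First, the Markov factorization associated with $\G$---coupled with the treatment of $F$ as an always-observed input conditional variable---gives
\begin{equation*}
\Prob(\D \given F, \G) = \prod_{i=1}^N \Prob(\D_{X_i} \given \D_{\PXi}, F, \G),
\end{equation*}
where $\D_{X_i}$ and $\D_{\PXi}$ denote the columns of $\D$ corresponding to $X_i$ and to its parents. For each node I would then condition on $\bai$, integrate out the local parameters $\tXij^f$ in closed form, and finally handle the remaining outer integral over $\bai$ by substituting its variational posterior mean $s_i\hkijk$.

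For the local integration inside a single node $i$, the key ingredient is the aggregation/partition property of the Dirichlet: if $\tXPif \sim \Dirichlet(\bai)$ with $\sum_{jk}\aijk = s_i$, then for each $j$ the conditional vector $\tXij^f$ is distributed as $\Dirichlet(\aijk)_{k}$, these conditional vectors are mutually independent across $j$, and they are also independent of the marginal weights $(\sum_k \btheta_{ijk}^f)_j$. Since the $i$th node's contribution to the likelihood depends only on $\tXij^f$, this independence lets me carry out a separate Dirichlet--Multinomial integration for each pair $(j,f)$ and obtain
\begin{equation*}
\Prob(\D_{X_i} \given \D_{\PXi}, F, \G, \bai) = \prod_{f = 1}^{|F|} \prod_{j = 1}^{|\PXi|} \frac{\Gamma(\alpha_{ij})}{\Gamma(\alpha_{ij} + n_{ij}^f)} \prod_{k=1}^{|X_i|} \frac{\Gamma(\aijk + n_{ijk}^f)}{\Gamma(\aijk)},
\end{equation*}
which is formally identical to the per-node BD term in \eqref{eq:bd} with $\bai$ in the role of the fixed hyperparameters.

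The outer integral $\int \Prob(\D_{X_i}\given \D_{\PXi},F,\G,\bai)\,p(\bai)\,d\bai$ has no closed form because $\bai$ sits inside the $\Gamma$ functions. I would close the argument by replacing the exact posterior of $\bai$ by its mean-field variational approximation $q(\bai) = s_i\cdot\Dirichlet(\tau_i\bkappa_i)$ from \eqref{eq:hier_vb} and performing the deterministic plug-in $\aijk \leftarrow \E_q[\aijk] = s_i\hkijk$; taking the product over $i$ then yields the stated expression for $q(\D\given F,\G)$. The main obstacle is justifying this last substitution: it is neither the ELBO nor $\E_q[\Prob(\D\given F,\G,\bai)]$ (which itself lacks a closed form, again because of the $\Gamma$ functions), but a deterministic substitution whose quality rests on the variational posterior being sharply concentrated about its mean. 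The positivity assumption in the lemma keeps every $\Gamma(s_i\hkijk)$ finite, and the connection to the variational fitting procedure of \citet{azzimonti19}---which is where the $\hkijk$ are actually computed---is the point that most needs explicit exposition.
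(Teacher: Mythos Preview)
Your derivation arrives at the same expression, and the intermediate identities (Markov factorisation over nodes, Dirichlet aggregation to get independent $\tXij^f\sim\Dirichlet((\aijk)_k)$ across $j$, and the Dirichlet--multinomial integral conditional on $\bai$) are all sound. However, the route differs from the paper's. The paper does not condition on $\bai$ in the original hierarchical model and then plug in a point estimate; instead it works \emph{entirely inside the variational model} \eqref{eq:hier_vb}, where the mean-field factorisation makes $\tXijf$ and $\balpha_{i\mid j}$ independent with $\tXijf\sim\Dirichlet(\bnu_{ij}^f)$. The $\balpha$ integral then disappears trivially, and the standard Dirichlet--multinomial calculation gives a BD-type product in the variational parameters $\nu_{ijk}^f$. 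The final step is not a concentration argument but an algebraic one: since the optimised variational posterior satisfies $\widehat{\nu}_{ijk}^f = s_i\hkijk + n_{ijk}^f$, conjugacy lets one read off $s_i\hkijk$ as the effective prior hyperparameter and substitute it for $\nu_{ijk}^f$. What this buys is that the quantity $q(\D\mid F,\G)$ is genuinely the marginal likelihood of the variational surrogate rather than a heuristic plug-in into the exact integrand; your version is conceptually cleaner but, as you yourself note, leaves the last substitution resting on a sharp-posterior argument that the paper does not need to invoke.
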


\begin{proof}
Under the hierarchical model \eqref{eq:hier}, the
conditional distribution of $X_i$ given $\PXi = j$ and $F = f$ is a categorical
distribution with parameters $\btheta_{X_i \given j}^f$ whose $k$th element is
$\big[\theta_{X_i,\PXi}^f\big]_{jk}/ \sum_{\tilde{k}}
\big[\theta_{X_i ,\PXi}^f\big]_{j\tilde{k}}$. The distribution of $\btheta_{X_i \given j}^f$ is a Dirichlet distribution with parameter $\balpha_{i \given j}$, whose $k$-th element is $[\balpha_{i \given j}]_k=\aijk /\sum_{\tilde{k}}\alpha_{ij\tilde{k}}$.

Under the variational model \eqref{eq:hier_vb}, the approximate posterior distribution of $\btheta_{X_i \given j}^f$ is a Dirichlet distribution with parameters $\bnu_{ij}^f$ whose
$k$th element is $\nu^f_{ijk}$. 
Since the parameter ${\nu}^f_{ijk}$  is estimated
\emph{a posteriori} as $\widehat{\nu}_{ijk}^f=s_i \hkijk + n_{ijk}^f$ (see
\cite{azzimonti19} for a detailed derivation), we can approximate the prior distribution of $\tXijf$ with  $q(\tXijf \given \G)=\Dirichlet(s_i \widehat{\bkappa}_{ij})$ because of conjugacy.

The variational approximation of the conditional distribution satisfies \emph{independence between
related data sets}.  Moreover, given a data set $F = f$, both \emph{parameter
modularity} and \emph{parameter independence} are satisfied. Thus,
\begin{multline*}
  q(\D \given F, \G) =
    \iint \prod_{f = 1}^{|F|} \prod_{i = 1}^N \prod_{j = 1}^{|\PXi|}
      q(X_i \given \PXi=j, F=f,\tXijf, \balpha_{i \given j}, \G) \\
      q(\tXijf, \balpha_{i \given j} \given \G)
      \dd\tXijf d\balpha_{i \given j}.
\end{multline*}
Thanks to the independence between $\tXijf$ and $\balpha_{i\given j}$ induced by
the variational model and the fact that $\int q(\balpha_{i \given j}
\given \G) \dd\balpha_{i \given j}=1$, we obtain
\begin{align*}
  q(\D \given F, \G)
    = \prod_{f = 1}^{|F|} \prod_{i = 1}^N \prod_{j = 1}^{|\PXi|}
        \int q(X_i \given \PXi = j, F = f, \tXijf, \G)
             q(\tXijf \given \G)
             \dd\tXijf ,
\end{align*}
which has the same form as the marginal likelihood of the classic Multinomial-Dirichlet model but with $s_i \hkijk$ as the parameter of the Dirichlet distribution. 
The approximate marginal likelihood can be thus written as \eqref{eq:bhd}.

\end{proof}

 Note that the marginal likelihood \eqref{eq:bhd} has the same form as
the classic BD score \eqref{eq:bd}, with $\alpha_{ijk}$ replaced by $s_i
\hkijk$, which represents the posterior average of $\alpha_{ijk}$ under the
hierarchical variational model. The posterior average is shared between
different related data sets, thus inducing a pooling effect that makes $\tXPif$
and $\bai$ dependent once more.

From Lemma \ref{lem:marginal}, we define the approximated Bayesian hierarchical
Dirichlet score as
\begin{equation*}
  \BHD(\G, \D \given F) = q(\D \given F, \G).
\end{equation*}
The proposed BHD score can be factorised over the nodes, \textit{i.e.},
\begin{equation*}
  \BHD(\G, \D \given F) = \prod_{i=1}^N \BHD(\XPi, F),
\end{equation*}
and can be used to learn a common structure for all related data sets,
taking into account potential differences in the probabilistic relationships
between variables.

\section{Computational Complexity}
\label{sec:complexity}

Estimating the BHD score in \eqref{eq:bhd} is more complex than estimating the
classic BD score in \eqref{eq:bd} because the latter is available in closed-form
but the former is not. In this section we will assess the computational
complexity of $\BHD(\XPi, F)$ and $\BD(\XPi; \bai)$.

In the case of BD(eu), the score in \eqref{eq:bd} is a closed-form function of
the counts $n_{ijk}$ which are tallied from $\{X_i, \PXi\}$ in
$O(n(1 + N_{\PXi}))$ time, where $N_{\PXi}$ is the dimension of the parent set
$\PXi$. Assuming that each variable takes at most $l$ values, there are
$l^{1 + N_{\PXi}}$ counts. Hence both computing the marginal counts $n_{ij}$ and
multiplying/summing up all the terms in \eqref{eq:bd} take $O\left(l^{1 +
N_{\PXi}}\right)$ time. The overall computational complexity of computing
$\BD\left(\XPi; \bai\right)$ then is
\begin{equation}
  O\left(n(1 + N_{\PXi}) + l^{1 + N_{\PXi}}\right).
  \label{eq:bd-comp}
\end{equation}

As for BHD, the counts $n^f_{ijk}$ are tallied from $\{X_i, \PXi, F\}$ in
$O(n(2 + N_{\PXi}))$ time because of the auxiliary variable $F$. Computing the
marginal counts $n^f_{ij}$ and multiplying/summing up all the terms in
\eqref{eq:bhd} takes $O\left(|F| \, l^{1 + N_{\PXi}}\right)$ time.

The increased complexity of $\BHD(\XPi, F)$, however, comes from the 
algorithm used to estimate the variational parameters $\tau_i$,
$\nu_{ijk}^f$ and $\kijk$. The variational algorithm is derived in \cite{azzimonti19} and is reproduced for convenience as Algorithm
\ref{algo:variational} in \ref{app:estimationalgo}.

The update  of  $\widehat{\nu}_{ijk}^f$ in step \ref{step1} of Algorithm
\ref{algo:variational} takes $O\left(|F| \, l^{1 + N_{\PXi}}\right)$ since there
are as many $\widehat{\nu}^f_{ijk}$ as there are $n_{ijk}^f$.

Each update of $\widehat{\tau}_i$ in step \ref{step3a} requires the computation
of $\partial \mathcal{L} / \partial \tau_i$ and
$\partial^2 \mathcal{L} / \partial^2 \tau_i$: both are closed-form functions
that sum over the indices $j$ and $k$ of $\hkijk$. Updating the
parameter $\widehat{\tau}_i$ by means of \eqref{eq:tau} thus takes
$O\left(l^{1 + N_{\PXi}}\right)$.

Each update of $\hkijk$ in step \ref{step3b} requires the computation of
$\partial \mathcal{L} / \partial \kijk$ and
$\partial^2 \mathcal{L} / \partial^2 \kijk$, which scale respectively as
$|F| l^{1 +N_{\PXi}}$ and as $l^{1 +N_{\PXi}}$. Given the partial derivatives,
the cost of updating the parameter $\hkijk$ by means of \eqref{eq:kappa} scales
as the number of elements, that is, $l^{1 +N_{\PXi}}$. Thus, since there are
 $l^{1 +N_{\PXi}}$ terms $\hkijk$, the computational cost of step \ref{step3b}
is $O\left((|F| + 3) l^{1 + N_{\PXi}}\right)$.

Once we take into account the number of iterations $m_1$ and $m_2$, we have that
the overall computational complexity of computing $\BHD(\XPi, F)$ is
\begin{multline}
  \underbrace{
    O\left(n(2 + N_{\PXi}) + |F| \, l^{1 + N_{\PXi}}\right)
  }_{\text{formula in \eqref{eq:bhd}}} +
  \underbrace{
    O\left(m_1 |F| \, l^{1 + N_{\PXi}}\right)
  }_{\text{step \ref{step1}}} + \\
  +\underbrace{
    O\left(m_1 m_2  \, l^{1 + N_{\PXi}}\right)
  }_{\text{step \ref{step3a}}} +
  \underbrace{
    O\left(m_1 m_2 (|F| \, + 3) l^{1 + N_{\PXi}}\right)
  }_{\text{step \ref{step3b}}} = \\
  =O\left(m_1 m_2 |F| \, l^{1 + N_{\PXi}}\right)
  \label{eq:bhd-comp}
\end{multline}
If we compare \eqref{eq:bhd-comp} with \eqref{eq:bd-comp}, we can see that they
are both linear in $l^{1 + |\PXi|}$, but the former also depends on the number
of related data sets $|F|$.
The computational cost of each step of the iterative procedure for computing BHD is thus comparable to the computational cost associated with learning a network 
by means of BDeu.

\subsection{Empirical evaluation}
In order to confirm the derived computational complexity of BHD, we
evaluate the time needed to compute the BHD score for a single node as different
parameters vary. We consider in particular:
\begin{itemize}
  \item $N_{\PXi} \in \{2, 3, 4, 6, 8, 10, 12, 14, 16\}$, where $N_{\PXi}$
    represents the size of the parent set;
  \item $|F| \in \{5, 10, 20, 40\}$, where $|F|$ represents the number of
    related data sets;
  \item $l = |X_i| \in \{2, 3, 4, 5\}$, where $|X_i|$ represents the number of
    states for each variable.
\end{itemize}
For each parameter combination, we sample 10 different parameter sets with the
following methods:
\begin{itemize}
  \item[\textbf{hier:}] the parameters associated with each of the related data
    sets are sampled from a hierarchical Dirichlet distribution with imaginary
    sample size equal to 10 and with a parameter $\balpha_i$ sampled from a
    Dirichlet distribution with all $\alpha_{0,ijk} = 1$;
  \item[\textbf{iid:}] the parameters associated with each of the related data
    sets are independently sampled from the same Dirichlet distribution with
    imaginary sample size equal to 10 and uniform $\balpha_i$.
\end{itemize}
For each parameter set, we sample $|F|$ related data sets comprising the same
number of observations $n_f \in \{5000, 10000, 20000\}$.

\begin{figure*}[bt]
\begin{center}
  \includegraphics[width=0.9\textwidth]{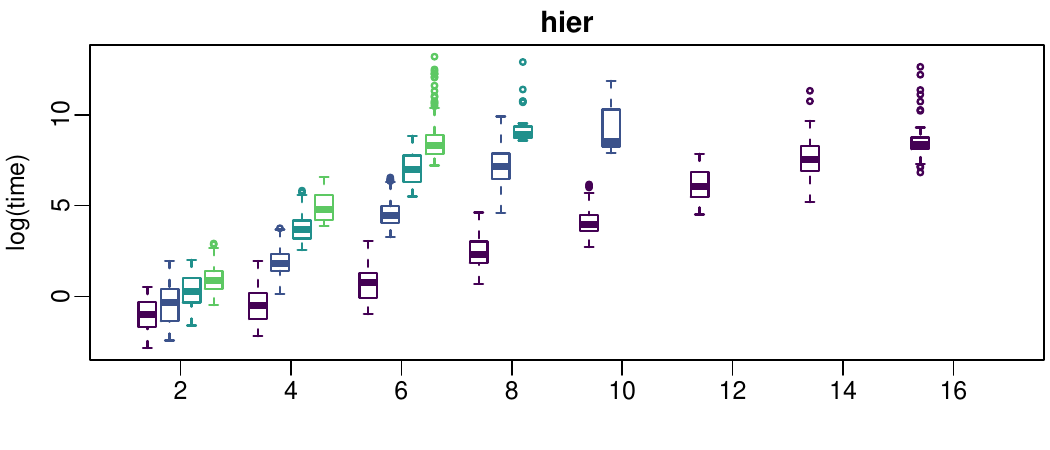}\\
  \includegraphics[width=0.9\textwidth]{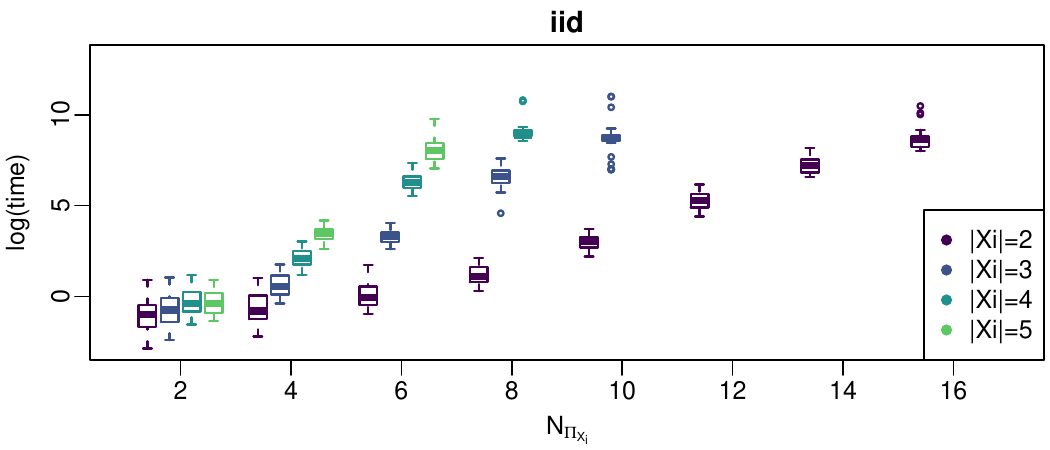}
  \caption{Boxplots of the logarithm of the computational time with different
    dimensions of the parent set $N_{\PXi}$ and number of states $|X_i|$, with
    parameters sampled with the hierarchical (top panel) or i.i.d (bottom panel)
    approach.}
  \label{fig:bigO}
\end{center}
\end{figure*}

Figure \ref{fig:bigO} shows how the logarithm of the computational time varies
as a function of the dimension of the parent set $N_{\PXi}$ for different
values of $|X_i|$. The logarithm of the computational time scales linearly in
the dimension of the parent set for each value of $|X_i|$, with a slope that is
proportional to the value $|X_i|$. A deviation from this behaviour is visible for
small values of $N_{\PXi}$ in the \textbf{iid} case, where the time needed to
estimate the parameters is negligible compared to fixed computational costs
like memory allocation.

The effect of $|F|$ on the computational times is weaker than that of $N_{\PXi}$
and $|X_i|$, while the effect of the number of observations $n_f$ is negligible.

To estimate the effect of $N_{\PXi}$, $|X_i|$ and $|F|$ on the computational
time, we estimated the parameters of the linear model
\begin{equation*}
  \log(\text{Time}) = \beta_0 + \beta_1 \mathbb{I}_{\text{iid}} +
                        \beta_2 \log(|F|) + \beta_3 (1+N_{\PXi}) \log(|X_i|) + \epsilon,
\end{equation*}
corresponding to
\begin{equation*}
  \text{Time} = e^{\beta_0 + \beta_1 \mathbb{I}_{\text{iid}}}
                  |F|^{\beta_2} |X_i|^{\beta_3 \left(1+N_{\PXi}\right)} e^{\epsilon},
\end{equation*}
where $\epsilon$ represents the measurement error. All the parameter estimates
are significantly different from zero (the associated p-values are smaller than
$10^{-15}$) and the model fits the computational times well ($R^2=0.95$).
Moreover, the estimated parameters $\beta_0 = -6.20$, $\beta_1 = -0.84$,
$\beta_2 = 0.71$ and $\beta_3 = 1.15$ are consistent with the theoretical
computational complexity derived in \eqref{eq:bhd-comp}.

\section{Numerical example}
\label{sec:exe}
We consider a simple example to illustrate the steps involved in learning a
network structure with BHD and in estimating the associated parameters.

We consider in this example the set of variables $X_1, \ldots, X_5$, with
$|X_i|=2$ for $i=1, \ldots 5$, and $|F|=2$ related data sets. We assume that the
true underlying structure for both the related data sets is that shown in the
top panel of Figure \ref{fig:demo}, and that the parameters for the two related
data sets are those summarised in Tables \ref{tab:paramX1}-\ref{tab:paramX5} in
 \ref{app:param}. These parameters have been sampled from a hierarchical
Dirichlet distribution with imaginary sample size equal to $10$ and with a
parameter $\balpha$ that is sampled from a Dirichlet distribution with all
$\alpha_{0, ijk} = 1$.

\begin{figure*}[tbhp]
\begin{center}
 \includegraphics[width=0.36\textwidth]{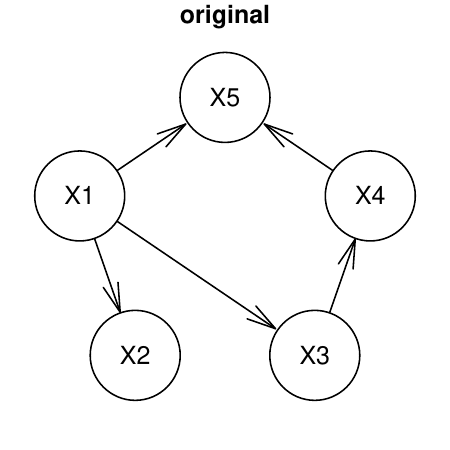}\\
  \includegraphics[width=0.36\textwidth]{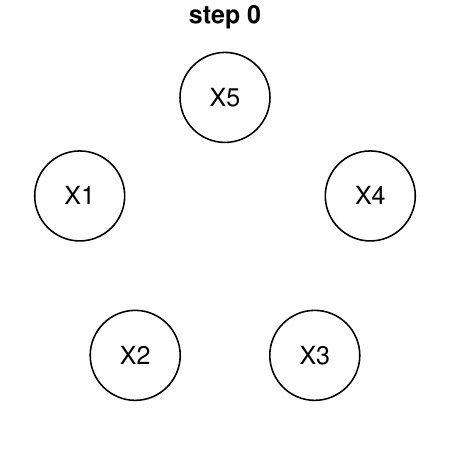}
  \hspace{5pt}
 \includegraphics[width=0.36\textwidth]{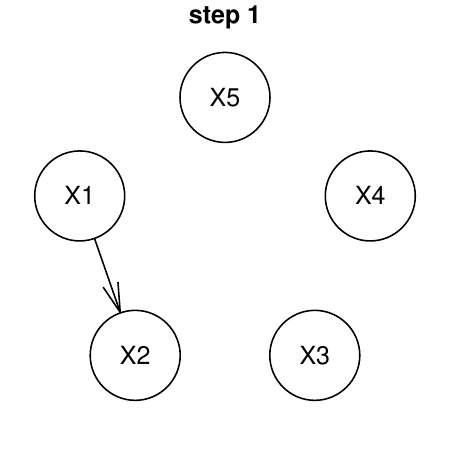}\\
 \includegraphics[width=0.36\textwidth]{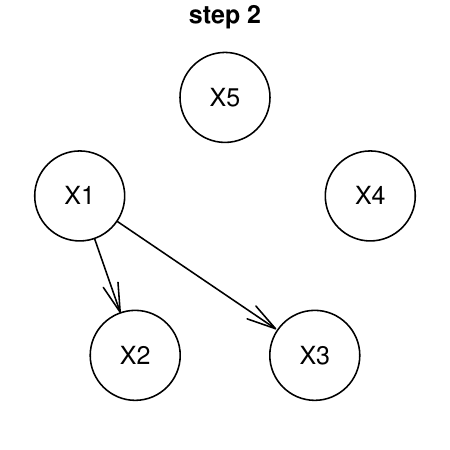}
 \hspace{5pt}
 \includegraphics[width=0.36\textwidth]{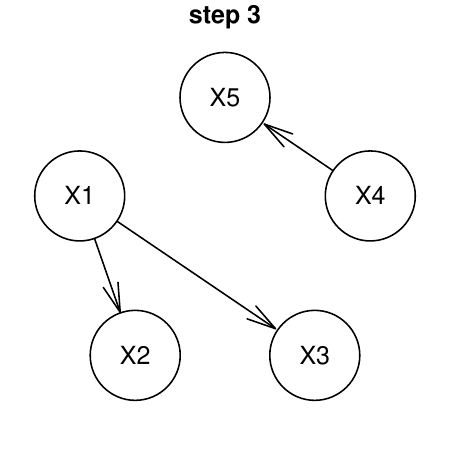}\\
 \includegraphics[width=0.36\textwidth]{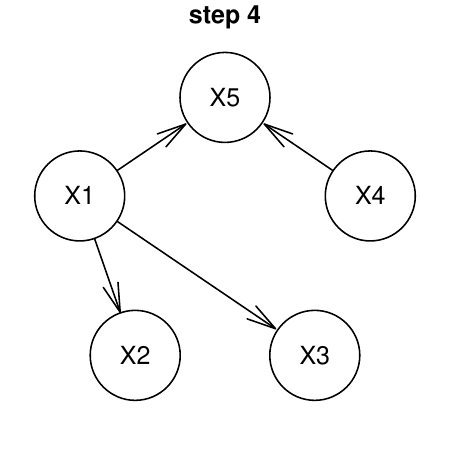}
 \hspace{5pt}
 \includegraphics[width=0.36\textwidth]{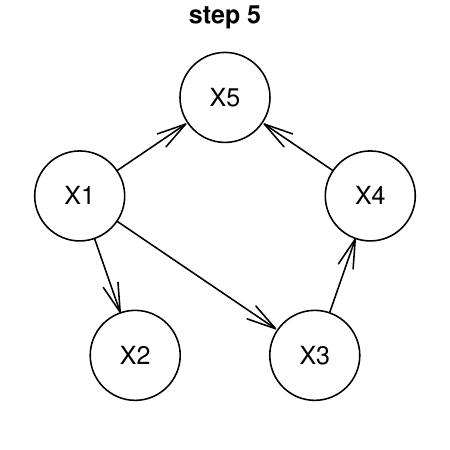}
  \caption{Original underlying network (top central panel) and networks estimated by means of BHD score during the 5 hill-climbing steps. }
  \label{fig:demo}
\end{center}
\end{figure*}

We learn the structure from $|F|$ related data sets, each containing $n_f =
1000$ observations sampled from the true underlying distribution, with the
hill-climbing implementation in bnlearn \cite{jss09} with the BHD score with
imaginary sample $s = 1$.

\begin{table}[bt]
  \begin{center}
    \caption{Scores associated with each node and overall score obtained in the
      numerical example hill-climbing optimisation. Step 0 corresponds to the
      starting empty graph, step 5 corresponds to the estimated network. The scores updated at each step due to arc addition
      are highlighted in bold.}
    \label{tab:scoreH}

    \begin{tabular}{|c|c|c|c|c|c||c|}
      \hline
      \textbf{step} & $\mathbf{X_1}$ & $\mathbf{X_2}$ & $\mathbf{X_3}$ & $\mathbf{X_4}$ & $\mathbf{X_5}$ & \textbf{overall}\\
      \hline
0& -466.205 & -528.450 &-695.655 &-642.506 &-672.477 &-3005.293\\
1& -466.205 & \textbf{-463.584} & -695.655 &-642.506 & -672.477 & -2940.427\\
2& -466.205 & -463.584 & \textbf{-651.734} & -642.506 & -672.477 & -2896.506\\
3& -466.205 &-463.584 & -651.734 &-642.506 &\textbf{-666.052} & -2890.081\\
4& -466.205 &-463.584 &-651.734 &-642.506 & \textbf{-650.698} &-2874.727\\
5& -466.205 &-463.584 &-651.734 &\textbf{-638.011} & -650.698 &-2870.232\\
 \hline
    \end{tabular}
  \end{center}
\end{table}

The true underlying network is recovered after five optimisation steps, shown in
Figure \ref{fig:demo}. The scores associated with each node and the increasing overall
score obtained during the 5 steps of hill-climbing are summarised in Table
\ref{tab:scoreH}. Tables \ref{tab:paramX1}-\ref{tab:paramX5} show the parameters
 estimated by means of the method described in \cite{azzimonti19}, associated with 
 the estimated network. The average absolute error in parameter
estimation is $0.023$. The average absolute error decreases to $0.005$ when the
number of observations increases to $n_f = 10000$, thus showing the consistency
of both the BHD score and the parameter estimation method as the size of the
data set increases. In contrast, even with $n_f = 10000$ both BDeu and BIC are
unable to learn the true underlying network from the pooled data sets due to the
differences between their distributions. Notice, \emph{e.g.}, the parameters associated to the node $X_2$ summarised in Table \ref{tab:paramX2}, which represent different associations between $X_1$ and $X_2$ across the two data sets. BHD is able to take into account different associations across data sets and to properly estimate the associated parameters (see Table \ref{tab:paramX2}). Both BDeu and BIC estimate instead no association between $X_1$ and $X_2$ because they pool the two data sets.

\subsection{Sensitivity to the hyperparameters}
\begin{figure*}[b]
\begin{center}
 \includegraphics[width=0.19\textwidth]{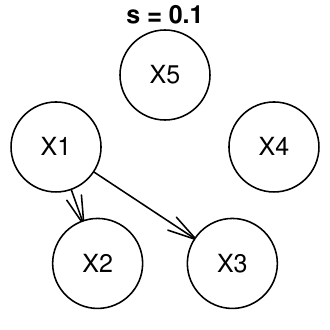}
 \includegraphics[width=0.19\textwidth]{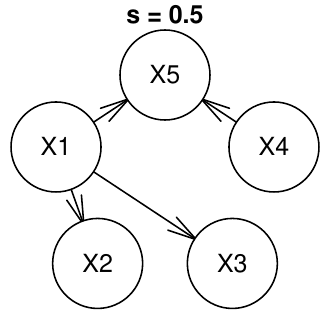}
 \includegraphics[width=0.19\textwidth]{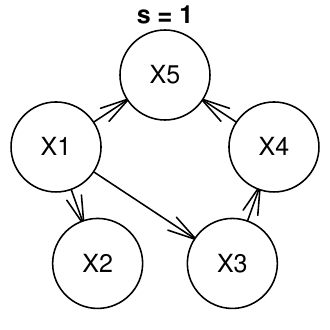}
 \includegraphics[width=0.19\textwidth]{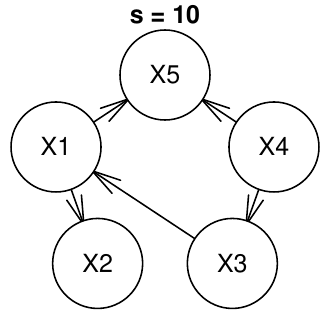}
 \includegraphics[width=0.19\textwidth]{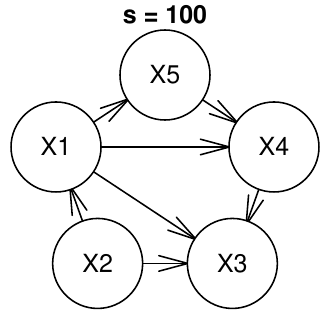}
  \caption{Networks estimated by means of BHD score  with $s \in \{0.1, 0.5, 1, 10, 100\}$ and $s_0=1$. 
  For a given value of $s$, the estimated network does not change as $s_0$ varies.
  }
  \label{fig:demo2}
\end{center}
\end{figure*}
We repeat the experiment for 
different values of the hyperparameters $s$ and  
$s_0$ in the set $\{0.1, 0.5, 1, 10, 100\}$.
The results are summarised in Figure \ref{fig:demo2}.
As $s$ increases the learned networks become more connected, 
similarly to what usually happens with BDeu  (see \cite{silander2007sensitivity} for a more detailed discussion).
However, given a value of $s$, the same graph is learned for all the values of  $s_0$.
In future works, it may be interesting to further study how to choose a suitable value for $s$ for the BHD score or to model it as an hidden random variable with its own prior distribution.

\section{Simulation Studies}
\label{sec:simulations}

We now perform some simulation studies to compare the empirical performance of
BHD to that of BDeu and BIC. For brevity, we will not discuss the results for
BIC in detail since they are fundamentally the same as those for BDeu. We are
interested in structure learning in the following two scenarios:
\begin{enumerate}
  \item[(a)] the true underlying network is the same for all the related data
    sets;
  \item[(b)] the true underlying network is the same for all the related data
    sets, apart from $N_F$ data sets in which $N_A$ randomly selected arcs have
    been removed.
\end{enumerate}
For each scenario, we generate synthetic data following three different models
for the local distributions of each node:
\begin{enumerate}
  \item[\textbf{hier}:] the parameters associated with each of the related data
    sets are sampled from a hierarchical Dirichlet distribution with imaginary
    sample size equal to 10 and with a parameter $\balpha_i$ sampled from
    a Dirichlet distribution with all $\alpha_{0, ijk} = 1$;
  \item[\textbf{iid}:] the parameters associated with each of the related data
    sets are independently sampled from the same Dirichlet distribution with
    imaginary sample size equal to 10 and uniform $\balpha_i$ vector;
  \item[\textbf{id}:] the parameters are identical for all data sets.
\end{enumerate}
The first approach follows the distributional assumptions of the hierarchical
model underlying BHD and may favour the proposed score. The last approach may
favour methods that do not take into account that data may comprise related data
sets, thus pooling all the data and assuming that all observations are generated
from the same distribution. The second approach is a middle ground between the
first and the third, since parameters associated with the related data sets are
different but they are not generated from the hierarchical model.

We then perform structure learning on the simulated data using the hill-climbing
implementation in bnlearn \cite{jss09} with the BHD and BDeu scores, both with
imaginary sample $s = 1$. In the case of BDeu we pool all the available data
from different related data sets. We evaluate the accuracy of network
reconstruction with the Structural Hamming Distance (SHD) \cite{mmhc} between
the estimated and the true underlying structure, True Positive (TP), False
Positive (FP) and False Negative (FN) arcs.

\subsection{Simulation study 1}

The aim of this simulation study is to evaluate the performance of BHD as
different networks parameters vary. Specifically, we consider different number
of nodes, related data sets and states for each variable.

\begin{figure*}[tb]
\begin{center}
  \includegraphics[width=0.45\textwidth]{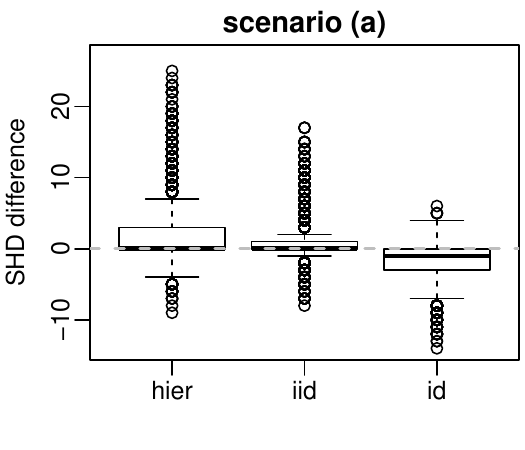}
  \includegraphics[width=0.45\textwidth]{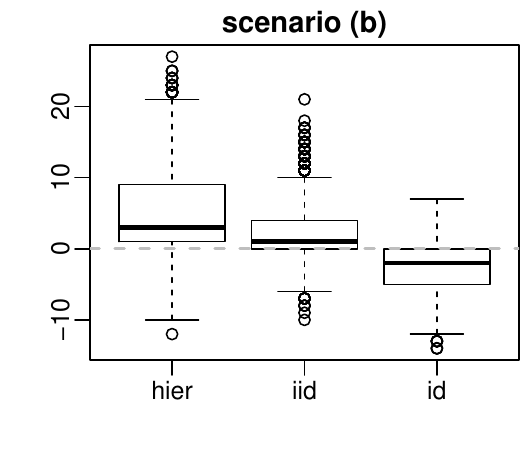}
  \caption{Simulation study 1: Boxplots of SHD difference between BHD and BDeu
    score for scenario (a) (left panel) and (b) (right panel). Positive values
    favour the hierarchical score.}
  \label{fig:shd}
\end{center}
\end{figure*}

\begin{figure*}[p]
\begin{center}
  \includegraphics[width=0.32\textwidth]{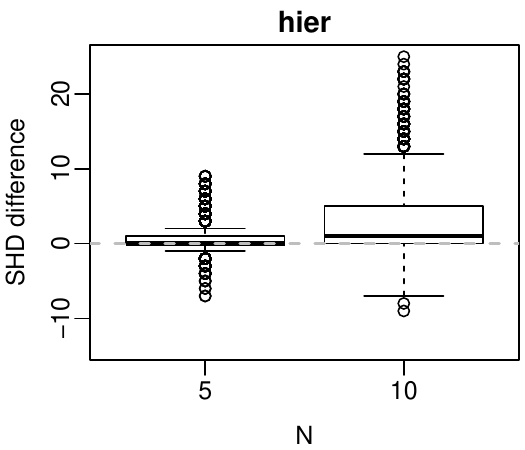}
  \includegraphics[width=0.32\textwidth]{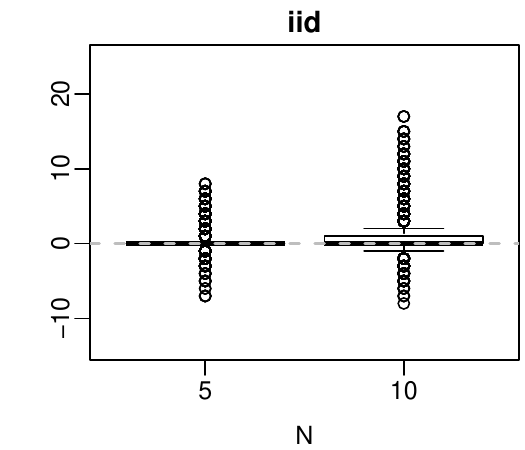}
  \includegraphics[width=0.32\textwidth]{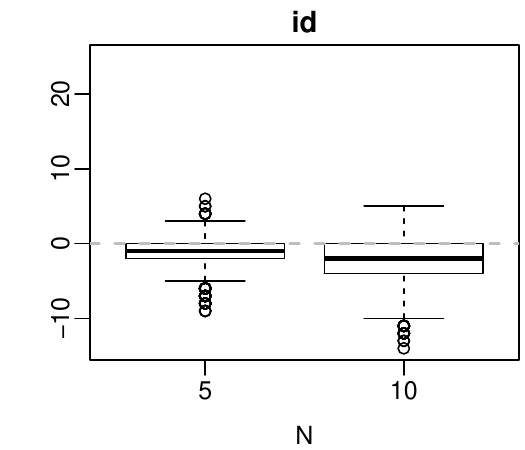} \\
  \vspace{5pt}
  \includegraphics[width=0.32\textwidth]{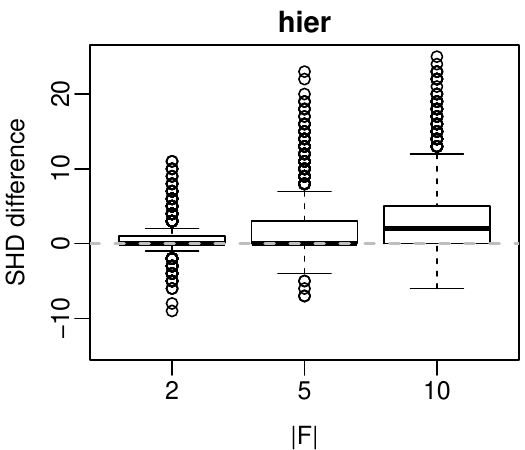}
  \includegraphics[width=0.32\textwidth]{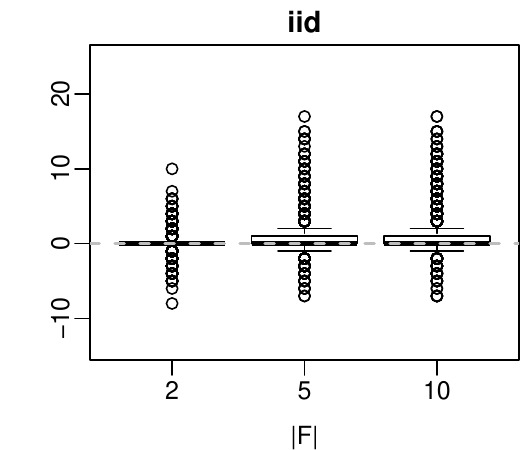}
  \includegraphics[width=0.32\textwidth]{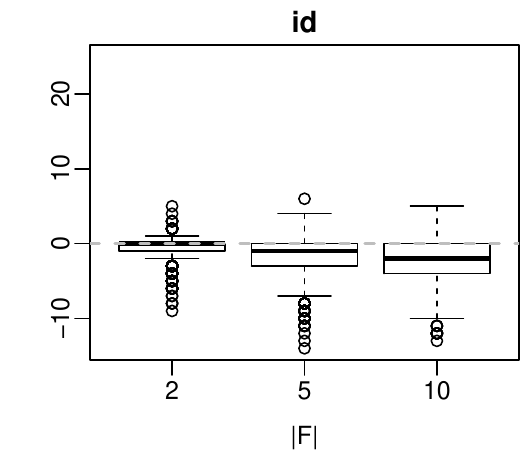} \\
  \vspace{5pt}
  \includegraphics[width=0.32\textwidth]{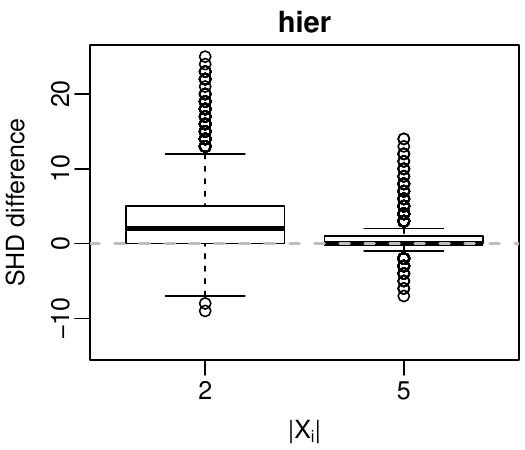}
  \includegraphics[width=0.32\textwidth]{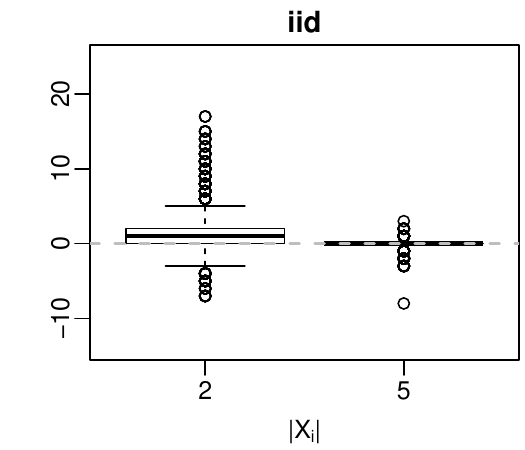}
  \includegraphics[width=0.32\textwidth]{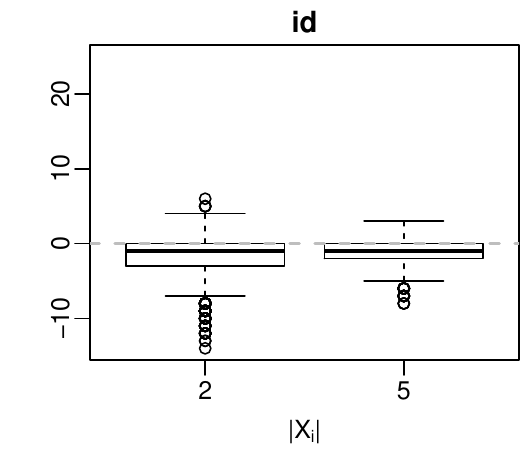} \\
  \vspace{5pt}
  \includegraphics[width=0.32\textwidth]{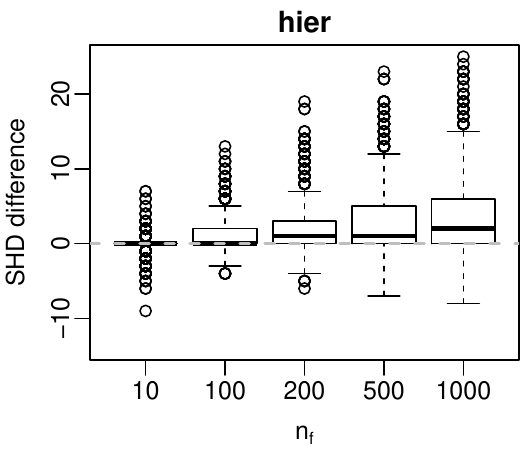}
  \includegraphics[width=0.32\textwidth]{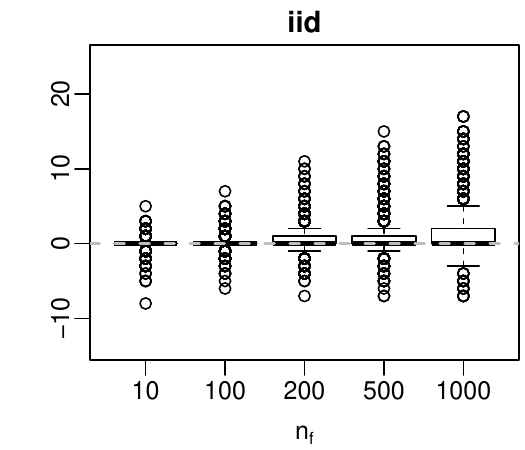}
  \includegraphics[width=0.32\textwidth]{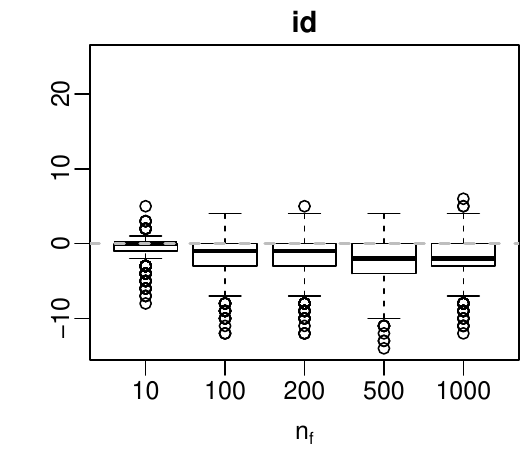}
  \caption{Simulation study 1: Boxplots of SHD difference between BHD and BDeu
    score for scenario (a) (equal structures) with different values of number of
    variables $N$, number of related data sets $|F|$, number of states $|X_i|$ and
    number of observations $n_f$, with parameters sampled with the hierarchical
    (left panels), i.i.d (central panels) or identical distribution (right
    panels) approach. Positive values favour the hierarchical score.}
  \label{fig:shd_param_a}
\end{center}
\end{figure*}

We first sample 3 network structures for each of three different levels of
sparsity, such that they contain $\{1, 1.2, 1.5\}\cdot N$ arcs, and each
combination of:
\begin{itemize}
  \item $N \in \left\{5, 10\right\}$, where $N$ represents the number of nodes;
  \item $|F| \in \left\{2, 5, 10\right\}$, where $|F|$ represents the number of
    related data sets;
  \item $|X_{i}| \in \left\{2, 5\right\}$, where $|X_{i}|$ represents the
    number of states for each variable.
\end{itemize}
Then, for both scenario (a) and (b), we replicate the same structure for all the
$|F|$ related data sets. In scenario (b), for each of the $N_F$ data sets
differing from the others, we randomly remove $N_A$ arcs from the network, with
$N_F\in\{1,2\}$ and $N_A\in\{1,2\}$. Thus, in scenario (b) we deal with $N_F$
structures that differ from one another and from the main structure by $N_A$
arcs.

Once the network structures have been generated, we sample 10 different
parameter sets for each of \textbf{hier}, \textbf{iid} and \textbf{id}. Then,
for each of these parameter sets, we sample 
$|F|$ related data sets, each containing $n_f \in \{10, 100, 200, 500, 1000\}$
observations.

\begin{figure*}[t]
\begin{center}
  \includegraphics[width=0.32\textwidth]{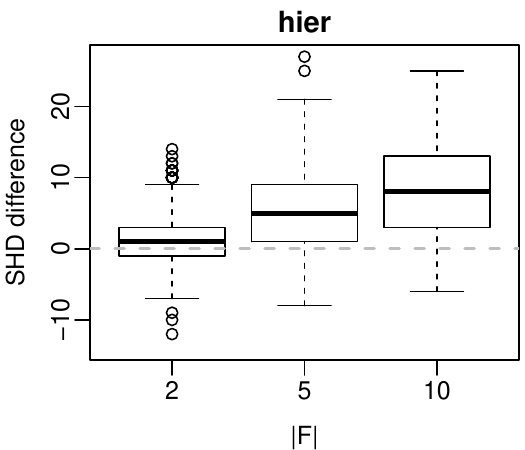}
  \includegraphics[width=0.32\textwidth]{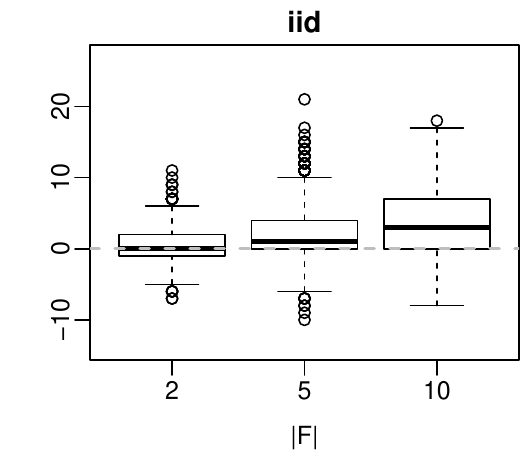}
  \includegraphics[width=0.32\textwidth]{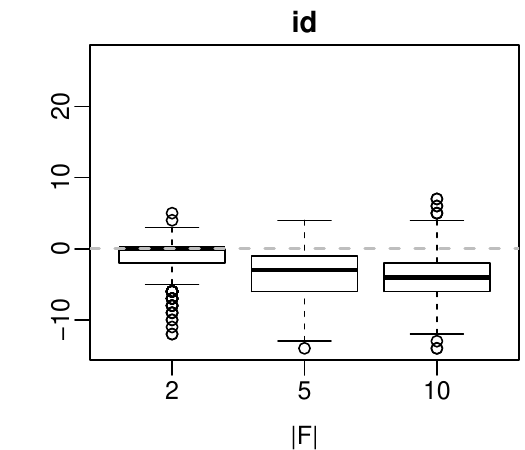} \\
  \vspace{5pt}
  \includegraphics[width=0.32\textwidth]{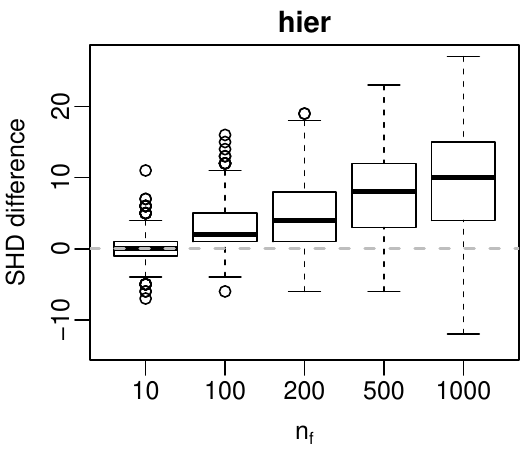}
  \includegraphics[width=0.32\textwidth]{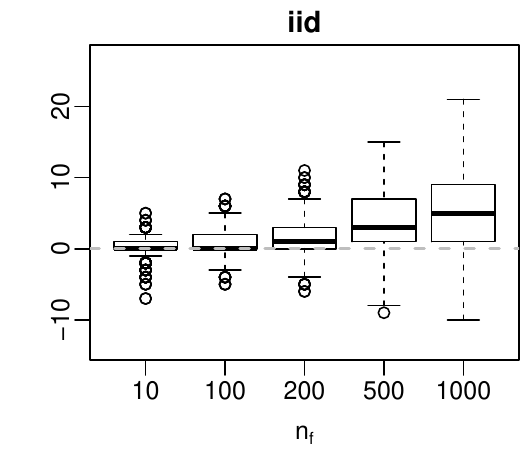}
  \includegraphics[width=0.32\textwidth]{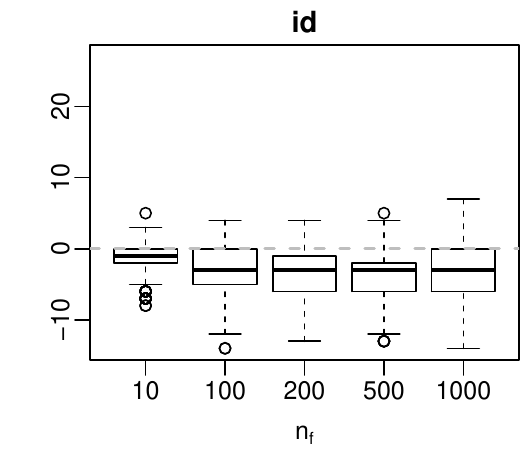}
  \caption{Simulation study 1: Boxplots of SHD difference between BHD and BDeu
    score for scenario (b) (different structures) with different values of
    number of related data sets $|F|$ and number of observations $n_f$, with
    parameters sampled with the hierarchical (left panels), i.i.d (central
    panels) or identical distribution (right panels) approach. Positive values
    favour the hierarchical score.}
  \label{fig:shd_param_b}
\end{center}
\end{figure*}

\begin{figure*}[t]
\begin{center}
  \includegraphics[width=0.32\textwidth]{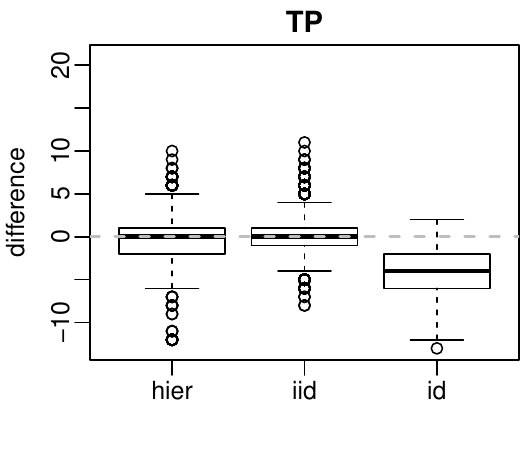}
  \includegraphics[width=0.32\textwidth]{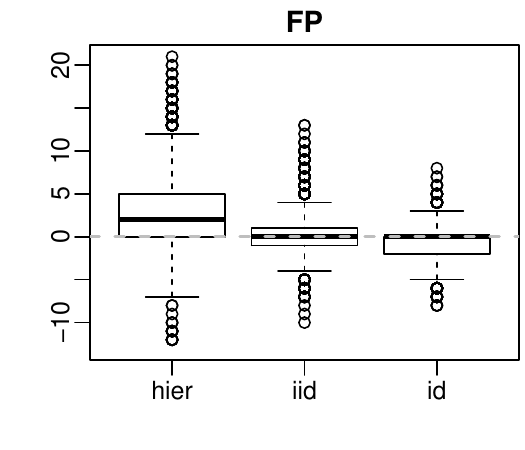}
  \includegraphics[width=0.32\textwidth]{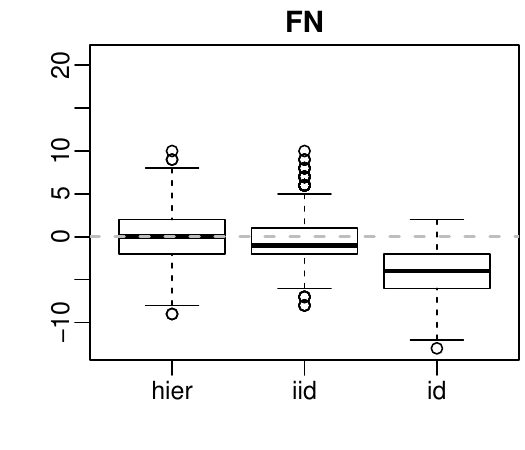}
  \caption{Simulation study 1: Boxplots of TP (left panel), FP (central panel)
    and FN (right panel) difference between BHD and BDeu score for scenario (b)
    (different structures). Positive values always favour the hierarchical
    score.}
  \label{fig:TPFPFN}
\end{center}
\end{figure*}

The difference between BDeu and BHD in terms of SHD for scenarios (a) and (b) is
shown in Figure \ref{fig:shd}, respectively in the left and right panel.
Positive values favour the proposed BHD score. When parameters are sampled from
a hierarchical distribution (\textbf{hier}), BHD outperforms BDeu in both
scenarios, with a larger improvement in scenario (b). In the \textbf{iid} case,
BHD is competitive with BDeu when the underlying network structures are
homogeneous, and it outperforms BDeu when the underlying network structures are
different. On the other hand, in the \textbf{id} case BDeu has better accuracy
than BHD because it correctly assumes that all the data are generated form the
same distribution, while BHD has a large number of redundant parameters that
would model the non-existing related data sets.

Figure \ref{fig:shd_param_a} shows how the difference in SHD between BHD and
BDeu varies for different simulation parameters in scenario (a). Specifically,
the differences between BHD and BDeu (positive for \textbf{hier} and
\textbf{iid}, negative for \textbf{id}) become increasingly large in magnitude
as the number of variables $N$ or the number of related data sets $|F|$
increase. On the other hand, the differences between BHD and BDeu gradually
decrease as the number of states $|X_i|$ increases. As for the sample size, BHD
increasingly outperforms BDeu in both \textbf{hier} and \textbf{iid} as $n_f$
increases. In the \textbf{id} case we expect the two scores to be asymptotically
equivalent, but the values we consider for $n_f$ are not large enough to clearly
show it empirically.

Figure \ref{fig:shd_param_b} shows  the relationship between the difference in
SHD and some key simulation parameters in scenario (b). The effect of both the
number of related data sets $|F|$ and the number of observations $n_f$ is more marked than in scenario (a). For the same $|F|$ and $n_f$, BHD outperforms BDeu
by a larger margin when some network structures are different (scenario (b))
compared to when they are all identical (scenario (a)).

Figure \ref{fig:TPFPFN} shows the difference in TP (left), FP (center) and FN
(right panel) between BDeu and BHD for scenario (b). Positive values favour the
proposed BHD score. While the two methods perform similarly in terms of TP and
FN, BHD outperforms BDeu in terms of FP in the \textbf{hier} case. The
structures learned by BHD are thus sparser and more interpretable than those
learned by BDeu.

We also perform some experiments with different values $s \in \{1,2,5,10\}$ of
the imaginary sample size. 
As $s$
increases, BHD achieves marginally lower SHDs. However, its average SHD is not
significantly different from that of BDeu for the same value of $s$.

\subsection{Simulation study 2}

Given the results of the first simulation study, we now focus on the effect of
specific parameters on the performance of BHD. In particular, the aim of this
simulation study is to evaluate the behaviour of  the proposed score  as the number of
related data sets increases.

Similarly to the first simulation study, we sample one network structure for each
of three different levels of sparsity ($\{1, 1.2, 1.5\}\cdot N$ arcs as before) and
each of $|F| \in \left\{2, 5, 10, 25, 50, 100\right\}$ related data sets. We
treat both the number of nodes ($N = 10$) and the number of states for each
variable ($|X_{i}| = 2$) as fixed.

Then, for both scenario (a) and (b), we replicate the same structure for all the
$|F|$ related data sets. In scenario (b), for each of the $N_F$ data sets
differing from the others, we randomly remove $N_A$ arcs from the network, with
$N_F\in\{1, 2, |F|\}$ and $N_A \sim \text{Bin}(N_T,p)$, where $N_T$ is the total
number of arcs and $p\in\{0.01, 0.1\}$. Thus, in scenario (b) we deal with $N_F$
structures that differ from one another and from the main structure by $N_A$
arcs.

Once the network structures have been generated, we sample 10 different
parameter sets for each structure and for each of \textbf{hier} and
\textbf{iid}; for each of these parameter sets, we sample $|F|$ related
data sets composed of $n_f \in \{10, 100, 200, 500, 1000, 2000,$ $5000,10000\}$
observations each. In this simulation study we disregard the \textbf{id} case and
we focus instead on the more interesting \textbf{hier} and \textbf{iid} cases.

Figures \ref{fig:Zdim} and \ref{fig:Zdim_n} show how the  SHD difference between
BHD and BDeu varies as the number of related data sets $|F|$ and the number of
observations $n_f$ increase. Results obtained in scenario (a) and (b) are
presented together for brevity. Moreover, we did not notice any practical
difference as the number of removed arcs $N_A$ or the number of networks with a
reduced number of arcs $N_F$ vary.

For small values of $|F|$ the improvement of BHD with respect to BDeu increases
with $|F|$, while for large values of $|F|$ the difference between BHD and BDeu
reaches a plateau. As expected, the gain is much larger in \textbf{hier}. In
both \textbf{hier} and \textbf{iid}  BHD increasingly outperforms BDeu  as $n_f$
increases. The difference between the two scores is particularly clear for
large values of $n_f$.

\begin{figure*}[t]
\begin{center}
  \includegraphics[width=0.45\textwidth]{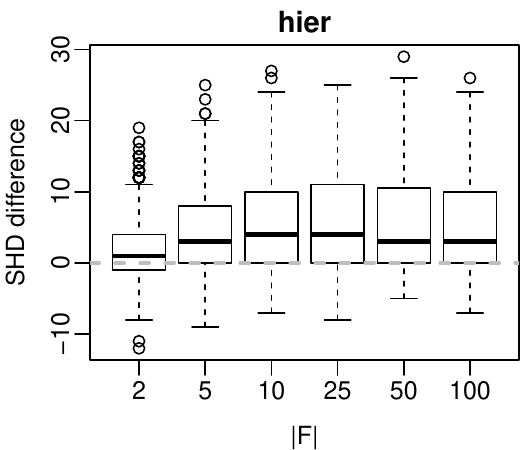}
  \includegraphics[width=0.45\textwidth]{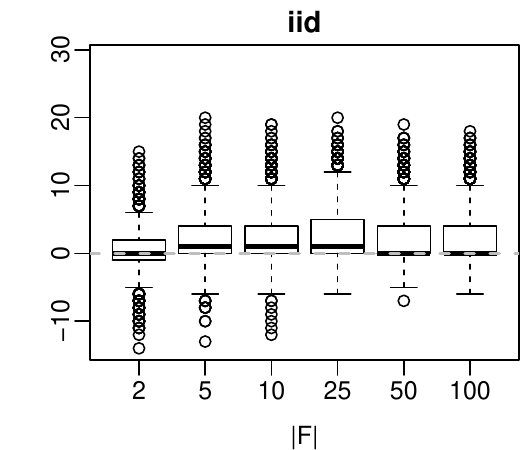}\\
  \caption{Simulation study 2: Boxplots of SHD difference between BHD and BDeu
    score with different values of number of related data sets $|F|$, with
    parameters sampled with the hierarchical (left panel) or  i.i.d (right
    panel) approach. Positive values favour the hierarchical score.}
  \label{fig:Zdim}
\end{center}
\end{figure*}

\begin{figure*}[t]
\begin{center}
  \includegraphics[width=0.9\textwidth]{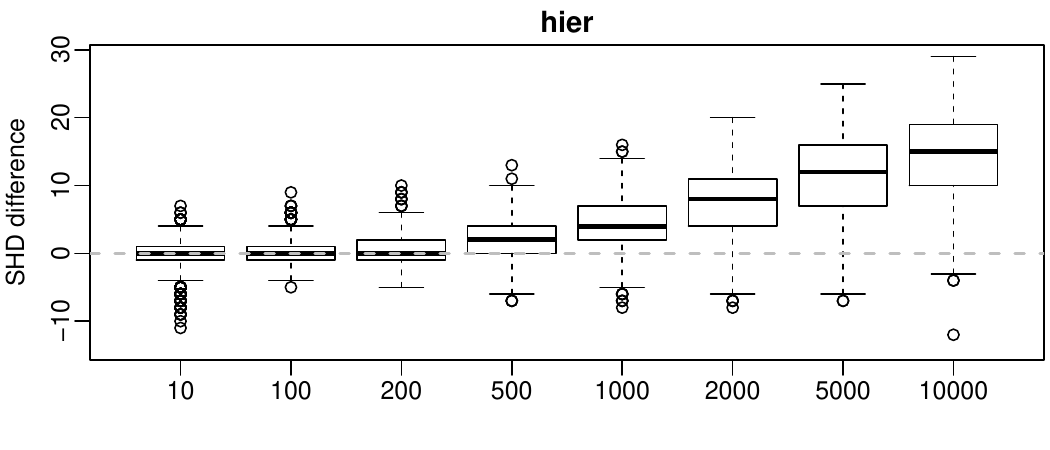}\\
  \includegraphics[width=0.9\textwidth]{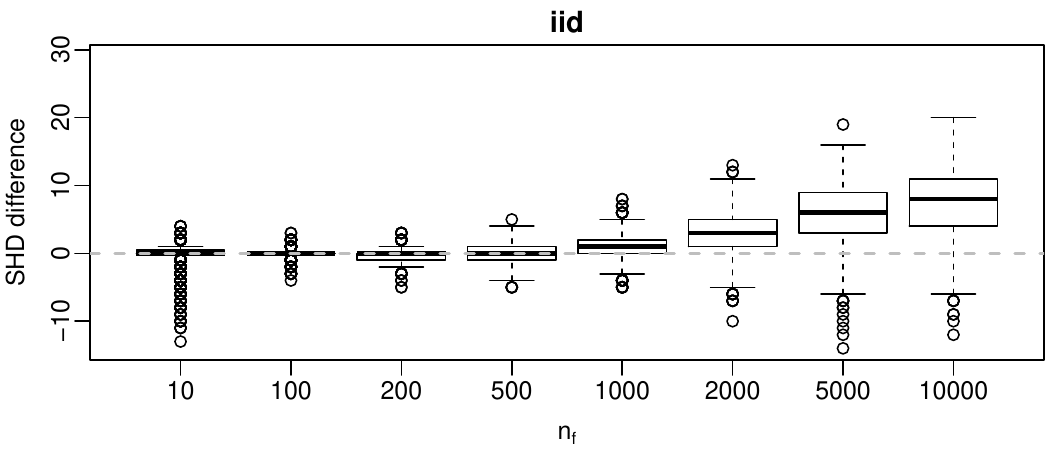}
  \caption{Simulation study 2: Boxplots of SHD difference between BHD and BDeu
    score with different values of number of observations $n_f$, with parameters
    sampled with the hierarchical (top panel) or  i.i.d (bottom panel) approach.
    Positive values favour the hierarchical score.}
  \label{fig:Zdim_n}
\end{center}
\end{figure*}

The improvement of BHD with respect to BDeu in terms of SHD is the result of a
lower number of FP arcs, as in the first simulation study. However, in this
study BHD outperforms BDeu also in terms of TP and FN for large values of $n_f$.

\subsection{Simulation study 3}

Following up from the second simulation study, we now evaluate the performance
of BHD as the number of nodes varies.

We sample a network structure for each of three different levels of sparsity
(the same as in the first two studies) and each of
$N \in \left\{5, 10, 25, 50, 100\right\}$ nodes. We treat the
number of related data sets ($|F| = 10$) and the number of states for each
variable ($|X_{i}| = 2$) as fixed.

The simulations are performed following the same steps as in the previous
simulation study. For both scenario (a) and (b), we replicate the same structure
for all the $|F|$ related data sets. We then we randomly remove $N_A$ arcs from
the networks for each of the $N_F$ data sets differing from the others, as in simulation study 2; then we
sample 10 parameter sets and, for each of them, $|F|$ related data sets with
$n_f \in \{10, 100, 200,$ $500, 1000, 2000,$ $5000,10000\}$ observations.

Figure \ref{fig:k} shows the SHD difference between BHD and BDeu as a function
of the number of nodes $N$. Results obtained in scenario (a) and (b) are
presented together for brevity as in simulation study 2. The boxplots show that
the bigger the number of nodes $N$, the larger the improvement of BHD with
respect to BDeu. As expected, the gain is much larger in \textbf{hier} compared
to \textbf{iid} for the same $N$. 

\begin{figure*}[tb]
\begin{center}
  \includegraphics[width=0.45\textwidth]{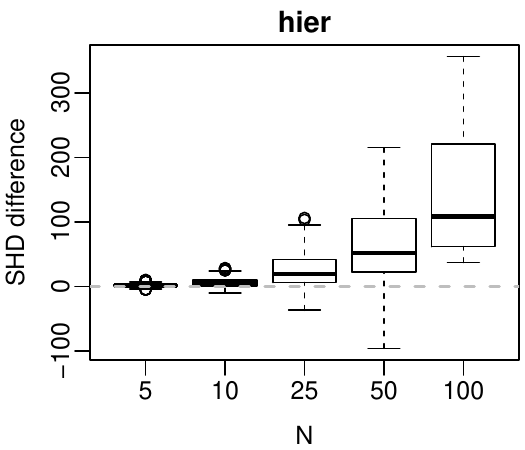}
  \includegraphics[width=0.45\textwidth]{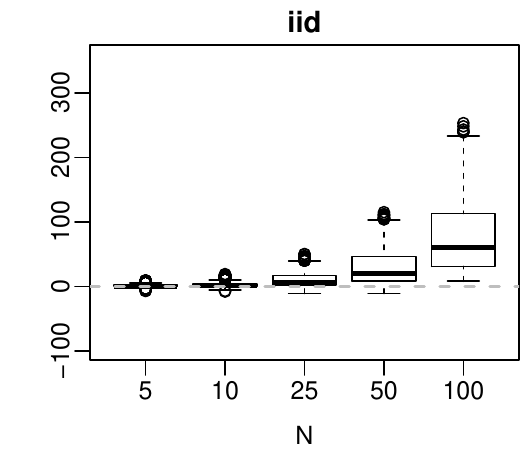}\\
  \caption{Simulation study 3: Boxplots of SHD difference between BHD and BDeu
    score with different values of number of variables $N$, with parameters
    sampled with the hierarchical (left panel) or  i.i.d (right panel) approach.
    Positive values favour the hierarchical score.}
  \label{fig:k}
\end{center}
\end{figure*}

As in the previous simulation study, the improvement of BHD with respect to BDeu
can be attributed to a lower number of FP arcs. However, in this simulation
study BHD outperforms BDeu also in terms of TP and FN for large values of $N$.

\subsection{Simulation study 4}

\begin{figure*}[tb]
\begin{center}
  \includegraphics[width=0.9\textwidth]{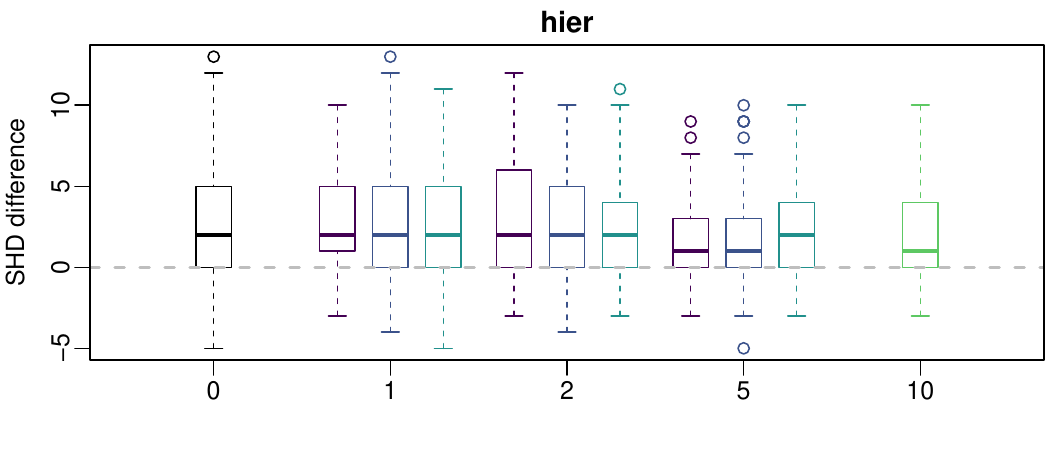}\\
  \includegraphics[width=0.9\textwidth]{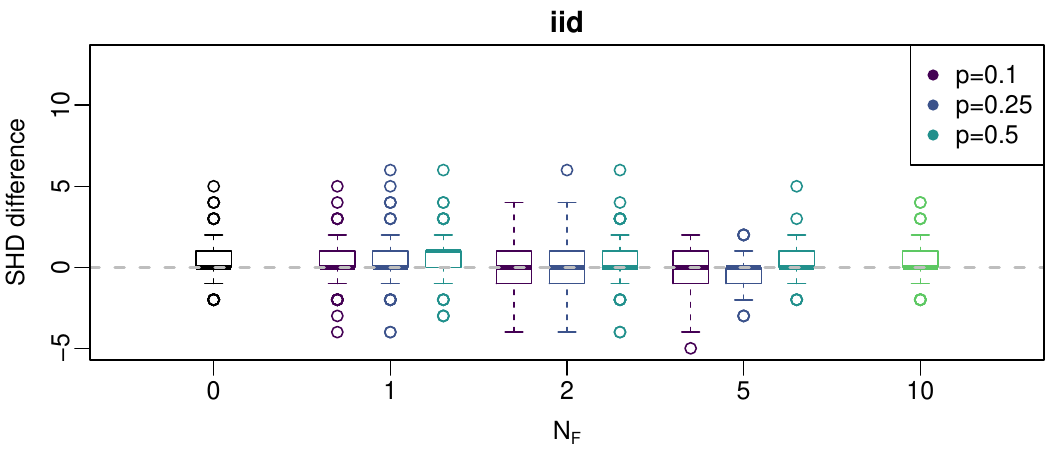}
  \caption{Simulation study 4: Boxplots of SHD difference between BHD and BDeu
    score with different number of sub-sampled data sets $N_F$ and
    proportions of sub-sampling $p$, with parameters sampled with the
    hierarchical (top panel) or  i.i.d (bottom panel) approach. $N_F = 0$
    corresponds to the case where all the data sets have the same number of
    observations,  $N_F = 10$ corresponds to the case where the samples have
    dimensions $(0.25 n_f, 0.25 n_f, 0.5n_f, 0.5n_f, 0.75n_f, 0.75n_f, 0.75n_f,
    n_f, n_f, n_f)$ . Positive values favour the hierarchical score.}
  \label{fig:NF}
\end{center}
\end{figure*}

The aim of this simulation study is to evaluate the performance of BHD when each
related data set is composed by a different number of observations. The
simulation proceeds as in the previous studies, while treating the number of
related data sets ($|F| = 10$), the number of nodes ($N = 10$) and the number of
states for each variable ($|X_i| = 2$) as fixed.

Only for scenario (a), we replicate the same structure for all the $|F|$ related
data sets, we sample 10 different parameter sets for each structure and for each
of \textbf{hier} and \textbf{iid}. Then, for each of these parameter sets, we
sample $|F|$ related data sets, each of them composed of a number of
observations $n_f  \in \{100,500,1000\}$ which is the same for all the $|F|$ related data sets apart
from $N_F$ data sets that are reduced to $p \cdot n_f$ observations, where
$p \in \{0.1, 0.25, 0.5\}$.

Furthermore, we consider an additional case with a different number of
observations for each related data set. Specifically we consider for the 10
related data sets a number of samples equal to  $(0.25 n_f, 0.25 n_f, 0.5n_f,
0.5n_f, 0.75n_f, 0.75n_f, 0.75n_f,$ $n_f, n_f, n_f)$. This composite case will
be identified by means of $N_F = 10$. We compare all these cases using the
standard case where all the related data sets are composed by the same
number of observations $n_f$ as a baseline. This case will be identified by
means of $N_F = 0$.

Figure \ref{fig:NF} shows the SHD difference between BHD and BDeu for different
values of $N_F$ and $p$. The performance of BHD in all the sub-sampled scenarios
is similar to the standard case ($N_F = 0$). Also in the composite scenario,
with a different number of observations for each related data set ($N_F = 10$),
the performance of BHD do not decrease significantly  with respect to the
equal-number-of-observation scenario. Analogously to the previous simulation
studies, the gain is much larger in \textbf{hier}.

\section{Conclusions and future work}
\label{sec:concl}

In this work we propose a new Bayesian score, BHD, to learn a common BN
structure from related data sets. BHD assumes that their joint distribution in
each node of the network follows a mixture of Dirichlet distributions, thus
pooling information between the data sets. The joint distribution in each node
is approximated by means of a variational method. We found that the resulting
computational complexity is linear in the number of related data sets, and that
otherwise it is in the same class as BDeu. We showed with a comprehensive set of
simulation studies that BHD outperforms both BDeu and BIC when applied to data
that comprise related data sets; and that it has comparable performance to BDeu
and BIC when the data are a single, homogeneous set of observations. Moreover,
the larger is the number of the nodes in the network or the larger is the number of
observations, the larger is the improvement with respect to both BDeu and BIC.

Learning a common BN structure with BHD builds on and complements our previous
work on parameter learning from related data sets, described in
\cite{azzimonti19}. We can use the latter to learn the parameters associated
with a network structure learned using BHD, thus obtaining different BNs (one
for each related data set) with the same structure and related parameters, as shown in the numerical example.
Combining the two approaches may increase the performance of the BN models such
as BN classifiers when dealing with related data sets. 
Future applications of the combined approach include, \emph{e.g.}, meta-analysis studies, which aim at combining information from several data sources \cite[Sec. 5.6]{gelman}.

The assumptions underlying BHD can be relaxed in several ways to extend its
applicability to more complex scenarios. For instance, relaxing the
assumption that related data sets share the same dependence structure may allow
to detect independencies that hold only in certain contexts, as in
\cite{boutilier}. Such context-specific independences would be directly
modelled by learning different but related network structures for each data set.
Another interesting development would be to derive a conditional independence
test from BHD to learn BNs from related data sets with constraint-based
algorithm similarly to, \textit{e.g.}, \cite{tillman}.

\appendix

\section{Estimation of variational parameters}
\label{app:estimationalgo}

For each node $X_i$, with $i=1,\ldots,N$, we estimate the variational
parameters of model \eqref{eq:hier_vb} by maximising 
a lower bound
of the evidence lower bound (ELBO) for the marginal log-likelihood for the node
$X_i$. The derivation of  the bound and the algorithm used to estimate the variational parameters are described in
detail in \cite{azzimonti19}. Here we summarise the parameter estimation method in Algorithm 
\ref{algo:variational} and we adapt it to match the notation used  in this paper.

\begin{algorithm}[t]
\caption{Variational Estimator}
\label{algo:variational}

\vspace{2mm}
For each $i=1,\ldots,N$, while $\text{iter}_1 < m_1$ and $\mathit{tol_1} > t_1$:
\begin{enumerate}
  \item Update $\widehat{\bnu}_{i}$ by means of \eqref{eq:nu}
    \label{step1}
  \item Fix the starting values of $\widehat{\tau}_i$,
    $\widehat{\bkappa}_{i}$. \label{step2}
  \item While $\text{iter}_2 < m_2$ and $\mathit{tol_2} > t_2$: \label{step3}
    \begin{enumerate}
      \item update $\widehat{\tau}_i$ given $\widehat{\bkappa}_{i}$
        and $\widehat{\bnu}_{i}$ by means of \eqref{eq:tau};
        \label{step3a}
      \item update $\widehat{\bkappa}_{i}$ given $\widehat{\tau}_i$
        and $\widehat{\bnu}_{i}$ by means of \eqref{eq:kappa};
        \label{step3b}
      \item increase the iterator $\text{iter}_2$. \label{step3c}
    \end{enumerate}
  \item Update $\widehat{\tau}_i$, $\widehat{\bkappa}_{i}$ with the
    values estimated at the end of step \ref{step3}. \label{step4}
  \item Increase the iterator $\text{iter}_1$. \label{step5}
\end{enumerate}
\end{algorithm}

The lower bound
of the ELBO for the node
$X_i$ is the the functional ${\mathcal{L}}_i$, which is defined as
\begin{align*}
  {\mathcal{L}}_i =&
\sum_{f=1}^{|F|} \sum_{k=1}^{|X_i|} \sum_{j=1}^{|\PXi|} (n_{ijk}^f-\nu_{ijk}^f+s_i\kijk) (\psi(\nu_{ijk}^f)-\psi(\nu_{i \cdot\cdot}^f)) + \\
&+\sum_{f=1}^{|F|} \sum_{k=1}^{|X_i|} \sum_{j=1}^{|\PXi|} \log\Gamma(\nu_{ijk}^f) -|F| \sum_{k=1}^{|X_i|} \sum_{j=1}^{|\PXi|}\log\Gamma(s_i\kijk) + \\
&+|F| \sum_{k=1}^{|X_i|} \sum_{j=1}^{|\PXi|}(s_i\kijk - 1)(\log(\kijk)-\psi(\tau_i\kijk)+\psi(\tau_i)) + \\
&+ \sum_{k=1}^{|X_i|} \sum_{j=1}^{|\PXi|}([\balpha_0]_{ijk}-\tau_i \kijk)(\psi(\tau_i\kijk)-\psi(\tau_i))  + \\
&+\sum_{k=1}^{|X_i|} \sum_{j=1}^{|\PXi|} \log\Gamma(\tau_i\kijk) -\sum_{k=1}^{|X_i|} \sum_{j=1}^{|\PXi|} \log\Gamma([\balpha_0]_{ijk}) -\sum_{f=1}^{|F|}\log\Gamma(\nu_{i \cdot\cdot}^f)+\\
&+|F|\log\Gamma(s_i)- \frac{s_i}{\tau_i}|F|\left(|X_i||\PXi|-1\right)+\log\Gamma\left(s_0\right) - \log\Gamma(\tau_i),
\end{align*}
where $\nu_{i \cdot\cdot}^f=\sum_{k=1}^{|X_i|} \sum_{j=1}^{|\PXi|} \nu_{ijk}^f$ and $\psi(\cdot)$ is the digamma function, the derivative of the $\log\Gamma(\cdot)$ function.

For each data set $F=f$, with $f=1,\ldots,|F|$, we can first estimate the
quantity $\nu_{ijk}^f$, associated with the configuration $k$ of $X_i$ and $j$
of $\PXi$, by maximising $\mathcal{L}_i$ with respect to  $\nu_{ijk}^f$ and by assuming
$\kijk$ to be fixed. By setting the partial derivative
of $\mathcal{L}_i$ with respect to $\nu_{ijk}^f$ to zero, we obtain:
\begin{equation}
  \hat{\nu}_{ijk}^f = n_{ijk}^f + s_i \kijk.
\label{eq:nu}
\end{equation}

We can then estimate $\hat{\tau}_i$ and $\hat{\kappa}_{ijk}$ given the value of 
$\nu_{ijk}^f$ by means of a fixed-point method, which alternates the
optimisation of $\mathcal{L}_i$ with respect to $\tau_i$ and $\kijk$. Since no 
analytical solution is available, we perform this optimisation by means of a Newton
algorithm.

We obtain the Newton update for $\tau_i$ by treating $\bkappa_i$ as fixed to
a vector whose elements are the
quantities $\kijk$ with $j = 1, \ldots, |\PXi|$ and $k = 1, \ldots, |X_i|$. If
we define
\begin{align*}
  &g_{\tau_i}(\tau_i,\boldsymbol{\kappa}_i)
    = {\partial\mathcal{L}_i}/{\partial \tau_i} = \frac{s_i}{\tau_i^2}|F|\left(|X_i||\PXi| - 1\right)+\\
    &+ \sum_{k=1}^{|X_i|} \sum_{j=1}^{|\PXi|}
         (\psi'(\tau_i\kijk)\kijk - \psi'(\tau_i))
         ([\balpha_0]_{ijk} - \tau_i \kijk
           - |F|(s_i\kijk - 1))
\end{align*}
and
\begin{align*}
  &h_{\tau_i}(\tau_i, \bkappa_i)
    = {\partial^2\mathcal{L}_i}/{\partial \tau_i^2} = - \frac{2s_i}{\tau_i^3}|F|\left(|X_i||\PXi| - 1\right) - \sum_{k=1}^{|X_i|} \sum_{j=1}^{|\PXi|}
         \psi'(\tau_i \kijk) \kappa^2_{ijk} +
         \\
    & +\psi'(\tau_i)+\sum_{k=1}^{|X_i|} \sum_{j=1}^{|\PXi|}
         (\psi''(\tau_i \kijk)\kappa^2_{ijk}-\psi''(\tau_i))
         ([\balpha_0]_{ijk} - \tau_i \kijk
           - |F|(s_i\kijk - 1)),
           \end{align*}
the Newton update for the parameter $\tau_i$ becomes
\begin{equation}
  \hat{\tau}_{i} = \hat{\tau}^{\text{old}}_{i}
    \exp\left(-\frac{
      g_{\tau_i}(\hat{\tau}_i^{\text{old}}, \bkappa_i)
    }{
      h_{\tau_i}(\hat{\tau}_i^{\text{old}}, \bkappa_i)\hat{\tau}_i^{\text{old}} +
     g_{\tau_i}(\hat{\tau}_i^{\text{old}}, \bkappa_i)
  }\right),
\label{eq:tau}
\end{equation}
where $\hat{\tau}_i^{\text{old}}$ is the estimate for
$\tau_i$ in the previous iteration of the Newton algorithm.

We obtain the Newton update for the parameter vector $\bkappa_i$  by
treating $\tau_i$ and $\bnu_i$ as fixed: their elements are  the quantities $\nu_{ijk}^f$ with $f = 1, \ldots,
|F|$, $j = 1, \ldots, |\PXi|$ and $k = 1, \ldots, |X_i|$. If we define
\begin{align*}
 & g_{\kijk}(\bkappa_i,\tau,\bnu_i)
    = {\partial \mathcal{L}_i}/{\partial \kijk} = \sum_{f = 1}^{|F|}s_i(\psi(\nu_{ijk}^f)-\psi(\nu_{i\cdot\cdot}^f)+ \tau_i\psi'(\tau_i\kijk)
         ([\balpha_0]_{ijk} +\\
         &- \tau_i \kijk -
          |F|(s_i\kijk-1)) +\tau_i\psi(\tau_i)+
     s_i|F|(\psi(\tau_i)-\psi(\tau_i\kijk) - \psi(s_i\kijk) +\\
     &+
       \log(\kijk) + 1) - \frac{|F|}{\kijk}
\end{align*}
and
\begin{align*}
  &h_{\kijk}(\bkappa_i,\tau_i,\bnu_i)
    = {\partial^2 \mathcal{L}_i}/{\partial \kappa^2_{ijk}} =  \tau_i^2\psi''(\tau_i\kijk)
       ([\balpha_0]_{ijk} - \tau_i \kijk +\\
       &-
        |F|(s_i\kijk - 1)) - \tau_i\psi'(\tau_i\kijk)(\tau_i+2s_i|F|) -
       s_i^2|F|\psi'(s_i\kijk) + \frac{s_i|F|}{\kijk} +
       \frac{|F|}{\kappa^2_{ijk}},
\end{align*}
we can write the Newton update for $\kijk$  as
\begin{equation}
  \hat{\kappa}_{ijk}
    = \hat{\kappa}_{ijk}^{\text{old}} +
      \frac{
        \displaystyle{
          \sum_{l=1}^{|X_i|} \sum_{\jmath=1}^{|\PXi|}
          \frac{
            g_{\kappa_{i\jmath l}}
            (\bkappa_i^{\text{old}},\tau_i,\bnu_i)
          }{
            h_{\kappa_{i\jmath l}}
            (\bkappa_i^{\text{old}},\tau_i,\bnu_i)
          }}
      }{
        \displaystyle{
          \sum_{l=1}^{|X_i|} \sum_{\jmath=1}^{|\PXi|}
          \frac{
            h_{\kijk}(\bkappa_i^{\text{old}},\tau_i,\bnu_i)
          }{
            h_{\kappa_{i\jmath l}}(\bkappa_i^{\text{old}},\tau_i,\bnu_i)
          }
        }
      } -
    \frac{
      g_{\kijk}
      (\bkappa_{i}^{\text{old}},\tau_i,\bnu_i)
    }{
      h_{\kijk}(\bkappa_{i}^{\text{old}},\tau_i,\bnu_i)
    },
  \label{eq:kappa}
\end{equation}
where $\kijk^{\text{old}}$ is the estimate for $\kijk$ in the
previous iteration of the Newton algorithm.

\section{Parameters of numerical example}
\label{app:param}
Tables \ref{tab:paramX1}-\ref{tab:paramX5} show the original and the estimated parameters of the network used in the numerical example presented in Section \ref{sec:exe}.

\begin{table}[H]
  \begin{center}
    \caption{Parameters associated with $X_1$.}
    \label{tab:paramX1}
    \begin{tabular}{|c|c|c||c|c|}
    \hline
    &  \multicolumn{2}{c||}{original} & \multicolumn{2}{c|}{estimated}\\
    \hline
    $X_1 | F$ & $1$ & $2$ & $1$ & $2$ \\
     \hline
    $1$ & 0.42 & 0.18 & 0.41 & 0.17 \\
    $2$ & 0.58 & 0.82 & 0.59 & 0.83 \\
      \hline
    \end{tabular}
    \caption{Parameters  associated with $X_2$.}
    \label{tab:paramX2}
    \begin{tabular}{|c|c|c|c|c||c|c|c|c|}
    \hline
    &  \multicolumn{4}{c||}{original} & \multicolumn{4}{c|}{estimated}\\
    \hline
     $F$ & \multicolumn{2}{c|}{$1$} & \multicolumn{2}{c||}{$2$} & \multicolumn{2}{c|}{$1$} & \multicolumn{2}{c|}{$2$} \\
     \hline
     $X_2 | X_1$ & $1$ & $2$& $1$ & $2$& $1$ & $2$& $1$ & $2$\\
     \hline
    $1$ & 0.31& 0.65 &  0.56 & 0.16 & 0.33 & 0.64 &  0.58 & 0.14\\
    $2$ & 0.69  &0.35 & 0.44 & 0.84 &0.67 & 0.36 & 0.42 & 0.86\\
      \hline
    \end{tabular}
    \caption{Parameters associated with  $X_3$.}
    \label{tab:paramX3}
    \begin{tabular}{|c|c|c|c|c||c|c|c|c|}
    \hline
    &  \multicolumn{4}{c||}{original} & \multicolumn{4}{c|}{estimated}\\
    \hline
     $F$ & \multicolumn{2}{c|}{$1$} & \multicolumn{2}{c||}{$2$} & \multicolumn{2}{c|}{$1$} & \multicolumn{2}{c|}{$2$} \\
     \hline
     $X_3 | X_1$ & $1$ & $2$& $1$ & $2$& $1$ & $2$& $1$ & $2$\\
     \hline
    $1$ & 0.61 & 0.42 &  0.15 & 0.53 &0.64 &0.39& 0.15 &0.54\\
    $2$ & 0.39 & 0.58 & 0.85 & 0.47 & 0.36 & 0.61& 0.85 &0.46\\
      \hline
    \end{tabular}
    \caption{Parameters associated with $X_4$.}
    \label{tab:paramX4}
    \begin{tabular}{|c|c|c|c|c||c|c|c|c|}
    \hline
    &  \multicolumn{4}{c||}{original} & \multicolumn{4}{c|}{estimated}\\
    \hline
     $F$ & \multicolumn{2}{c|}{$1$} & \multicolumn{2}{c||}{$2$} & \multicolumn{2}{c|}{$1$} & \multicolumn{2}{c|}{$2$} \\
     \hline
     $X_4 | X_3$ & $1$ & $2$& $1$ & $2$& $1$ & $2$& $1$ & $2$\\
     \hline
    $1$ & 0.47  & 0.51 &  0.38 & 0.28 &0.45 & 0.51& 0.40 & 0.28\\
    $2$ & 0.53 & 0.49 & 0.62 & 0.72 &0.55 & 0.49& 0.60 &0.72\\
      \hline
    \end{tabular}
 \end{center}
\end{table}
\begin{table}[H]
  \begin{center}
    \caption{Parameters associated with $X_5$.}
    \label{tab:paramX5}
    \begin{tabular}{|c|c|c|c|c|c|c|c|c|}
    \hline
    &  \multicolumn{8}{c|}{original}\\
    \hline
     $F$ & \multicolumn{4}{c|}{$1$} & \multicolumn{4}{c|}{$2$} \\
     \hline
     $X_4$ & \multicolumn{2}{c|}{$1$} & \multicolumn{2}{c|}{$2$} & \multicolumn{2}{c|}{$1$} & \multicolumn{2}{c|}{$2$}  \\
     \hline
     $X_5 | X_1$ & $1$ & $2$& $1$ & $2$ & $1$ & $2$& $1$ & $2$\\
     \hline
    $1$ & 0.46 & 0.56 &  0.57 & 0.52 &0.65& 0.43&0.22 &0.38 \\
    $2$ & 0.54 & 0.44 &0.43 & 0.48 & 0.35 &0.57&0.78 &0.62 \\
      \hline
      \hline
       & \multicolumn{8}{c|}{estimated}\\
       \hline
    $F$ & \multicolumn{4}{c|}{$1$} & \multicolumn{4}{c|}{$2$} \\
      \hline
     $X_4$ & \multicolumn{2}{c|}{$1$} & \multicolumn{2}{c|}{$2$} & \multicolumn{2}{c|}{$1$} & \multicolumn{2}{c|}{$2$}  \\
     \hline
     $X_5 | X_1$ & $1$ & $2$& $1$ & $2$ & $1$ & $2$& $1$ & $2$\\
     \hline
    $1$ & 0.41 &0.58 &0.53 & 0.53& 0.77 &0.44& 0.14 &0.38\\
    $2$ & 0.59 &0.42 &0.47 & 0.47& 0.23 &0.56& 0.86 &0.62\\
      \hline
    \end{tabular}
  \end{center}
\end{table}

\subsubsection*{Acknowledgements}
We would like to acknowledge support for this project from the Swiss National
Science Foundation (NSF, Grant No. IZKSZ2\_162188).

\bibliography{biblio}

\end{document}